\documentclass[journal]{IEEEtran}

\usepackage{amsmath} % assumes amsmath package installed
\usepackage{amssymb}  % assumes amsmath package installed
\usepackage{graphicx}
\usepackage{epstopdf}
\usepackage{amsmath}
\usepackage{mathtools}
\usepackage{dsfont}
\usepackage{tikz}
\usepackage{siunitx}
\usepackage{hyperref}

   % Jan Hlavacek
\usepackage{caption}
\usepackage{subcaption}
\usepackage{floatrow}
\pagenumbering{gobble}

\usepackage{balance}    % have even columns on last page
\usepackage{amsthm}
\usepackage[ruled,vlined,linesnumbered]{algorithm2e}
\usepackage{xcolor}
\usepackage{tcolorbox}
\usepackage{algpseudocode}% http://ctan.org/pkg/algorithmicx
\usepackage{dsfont}

\usepackage[noadjust]{cite} % multiple citations are combined
\usepackage{amsmath,stmaryrd,graphicx}
\usepackage{mathtools}

\newcommand{\norm}[1]{\left\lVert#1\right\rVert}
\newcommand{\R}{\mathbb{R}}
\renewcommand{\S}{\mathcal{S}}
\newcommand{\classK}{\mathcal{K}}

% \newtheoremstyle{boldStyle}% name of the style to be used
%   {\topsep}% measure of space to leave above the theorem. E.g.: 3pt
%   {\topsep}% measure of space to leave below the theorem. E.g.: 3pt
%   {\itshape}% name of font to use in the body of the theorem
%   {0pt}% measure of space to indent
%   {\bfseries}% name of head font
%   {.}% punctuation between head and body
%   { }% space after theorem head; " " = normal interword space
%   {\thmname{#1}\thmnumber{ #2}\thmnote{ (#3)}}

% \newtheoremstyle{italicStyle}% name of the style to be used
%   {\topsep}% measure of space to leave above the theorem. E.g.: 3pt
%   {\topsep}% measure of space to leave below the theorem. E.g.: 3pt
%   {}% name of font to use in the body of the theorem
%   {0pt}% measure of space to indent
%   {\bfseries}% name of head font
%   {.}% punctuation between head and body
%   { }% space after theorem head; " " = normal interword space
%   {\thmname{#1}\thmnumber{ #2}\thmnote{ (#3)}}

% \theoremstyle{boldStyle}
% \newtheorem{theorem}{Theorem}
% \newtheorem{lemma}{Lemma}
% \newtheorem{proposition}{Proposition}
% \newtheorem{definition}{Definition}
% \newtheorem{corollary}{Corollary}

% \theoremstyle{italicStyle}
% \newtheorem{assumption}{Assumption}
% \newtheorem{remark}{Remark}
% \newtheorem{example}{Example}

\newtheorem{theorem}{Theorem}

\newtheorem{proposition}{Proposition}

\theoremstyle{definition}
\newtheorem{definition}{Definition}
\theoremstyle{remark}
\newtheorem{remark}{Remark}
\theoremstyle{definition}

\theoremstyle{definition}
\newtheorem{example}{Example}

\newcommand{\newsec}[1]{
\vspace{0.2cm} 
\noindent \textbf{#1}
}

\definecolor{ugoColor}{rgb}{0.6,0.8,0.0}
\definecolor{ligthGray}{rgb}{0.95,0.95,0.95}

\makeatletter
\newcommand{\fixed@sra}{$\vrule height 2\fontdimen22\textfont2 width 0pt\shortrightarrow$}
\newcommand{\shortarrow}[1]{%
  \mathrel{\text{\rotatebox[origin=c]{\numexpr#1*45}{\fixed@sra}}}
}
\makeatother

\begin{document}

% \title{Multi-rate control: from POMDP planning \\ to low level actuation}

% \title{A Comparison of Control Barrier Functions and Artificial Potential Fields for Obstacle Avoidance}

\title{\LARGE \bf
Comparative Analysis of Control Barrier Functions and \newline Artificial Potential Fields for Obstacle Avoidance
}

\author{Andrew Singletary$^1$, Karl Klingebiel$^2$, Joseph Bourne$^2$, Andrew Browning$^2$, Phil Tokumaru$^2$, and Aaron Ames$^1$
\thanks{$^1$Mechanical and Civil Engineering, Caltech, Pasadena, CA, USA, 91125.
        {\tt\small \{ames,asinglet\}@caltech.edu}}
\thanks{$^2$Aerovironment, Simi Valley, CA, USA, 93065. \newline
        {\tt\small \{klingebiel,bourne,browning,tokumaru\}@avinc.com}}
        }

\maketitle

\begin{abstract}
Artificial potential fields (APFs) and their variants have been a staple for collision avoidance of mobile robots and manipulators for almost 40 years. Its model-independent nature, ease of implementation, and real-time performance have played a large role in its continued success over the years. Control barrier functions (CBFs), on the other hand, are a more recent development, commonly used to guarantee safety for nonlinear systems in real-time in the form of a filter on a nominal controller. 
In this paper, we address the connections between APFs and CBFs.  At a theoretic level, we prove that APFs are a special case of CBFs: given a APF one obtains a CBFs, while the converse is not true.  Additionally, we prove that CBFs obtained from APFs have additional beneficial properties and can be applied to nonlinear systems.
Practically, we compare the performance of APFs and CBFs in the context of obstacle avoidance on simple illustrative examples and for a quadrotor, both in simulation and on hardware using onboard sensing.  
These comparisons demonstrate that CBFs outperform APFs. 
% Practically, we compare the performance of APFs and CBFs in the context of obstacle avoidance on simple illustrative examples and for quadrotors.  In the latter case, we show in simulation and on hardware that CBFs outperform APFs in the context of providing smooth behaviors that are minimally invasive while guaranteeing obstacle avoidance. 
\end{abstract}

\IEEEpeerreviewmaketitle

\section{Introduction}

As mobile robots become increasingly popular in society, many hobbyists and companies are tasked with the challenge of keeping their robots from colliding with people, objects, and other robots. A real-time obstacle avoidance framework is necessary for ensuring that the operator is able to safely utilize the product. The goal of such a framework is to provide safety, but minimally alters the behavior of the robot when it is far away from any potential collisions. While many options exist for this task, few are as simple, easy to implement, and well-established as potential fields.

Artificial  Potential  Fields (APFs)  have  been  utilized  for over thirty years in the context of real-time obstacle avoidance. They were first in the seminal paper by Khatib \cite{khatib1986real}, and has since been developed further, beginning with computational methods \cite{barraquand1992numerical}. Of particular importance to this work, there has also been significant work in applying APFs to obstacle avoidance \cite{warren1989global,lee2003artificial}, including dynamic obstacles \cite{ge2002dynamic}. Work has also been been done on improving the behavior of artificial potential fields, particularly in dealing with undesirable oscillation behavior \cite{ren2006modified}. Finally, the search for effective methods for path planning using APFs has continued \cite{li2012efficient}, including application to UAV path planning \cite{chen2016uav}.

Control barrier functions (CBFs) were introduced recently \cite{ames2014control,ames17}, and serve as a method for providing safety guarantees of nonlinear systems via optimization-based controllers. They are commonly used in the safety-critical controls community due to their robustness \cite{xu2015robustness} and real-time performance capabilities, even for dynamic robots \cite{gurriet2018towards,nguyen20163d}.  They have, therefore, found application in a variety of domains: automotive safety \cite{jankovic2018robust}, robotics \cite{landi2019safety,cortez2019control}, multi-agent systems \cite{Wang17,glotfelter2017nonsmooth} and quadrotors \cite{wang2017safe}. See \cite{ames2019control} for a recent survey.  

%lindemann2019control

% on very complicated nonlinear systems like legged robots \cite{nguyen20163d}. 

% Ensuring safety via quadratic programming based control laws were popularized by \cite{ames17}, where safety constraints were incorporated via control barrier functions (CBFs). This was first applied to adaptive cruise control, and has since been utilized in a variety of application domains: automotive safety \cite{jankovic2018robust}, robotics \cite{cortez2020correct,cortez2019control,rauscher2016constrained} and multi-agent systems \cite{Wang17,lindemann2019control}. See \cite{ames2019control} for a recent survey. 

Given the historic use of artificial potential fields, and the recent popularity of control barrier functions, the natural question to ask is: \emph{How do control barrier functions compare to artificial potential fields?}

\begin{figure}
    \centering
    \includegraphics[width=\columnwidth,trim=0 400 0 0, clip]{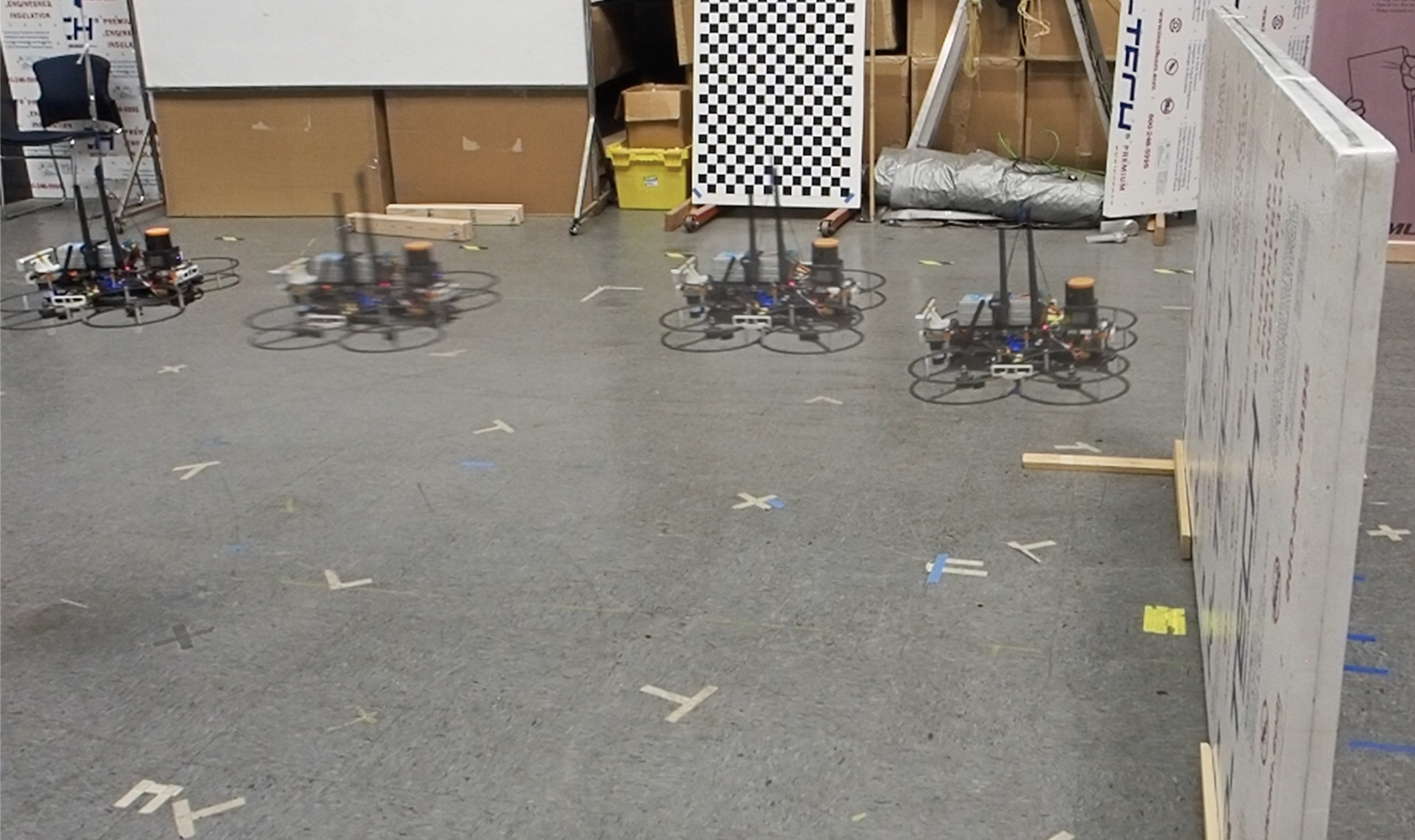}
    \caption{Quadrotor given a waypoint on the other side of wall, and stopping before colliding using control barrier functions.}
    \label{fig:intro_quad}
\end{figure}

% Since the main application of control barrier functions is to provide safety guarantees for the systems in which they are implemented, their use in the literature is generally heavily model-dependent. In contrast, we attempt to show in this work just how effectively control barrier functions can be utilized even without a model of the system. And while the provable safety guarantees are lost in this context, safety can be achieved practically with little-to-no tuning.

% We demonstrate this conjecture in simulations of a single integrator, a double integrator, and a quadrotor, and then showcase the robustness in the transition to hardware on a quadrotor. In both the simulation and hardware experiments of the quadrotor, the only tuning parameter of a control barrier function, $\alpha(\cdot)$, is kept at the constant $\alpha \equiv 1$, and the only model used is that of a single integrator, where the control input is the velocity of the mobile robot.

The major contribution of this work is a comparative analysis of CBFs and APFs---both theoretically and through simulation and experimental results on obstacle avoidance.  At a formal level, we establish that APFs can be used to synthesize a specific instance of a CBF, thereby showing that CBFs are more general than APFs.  Additionally, this translation results in beneficial properties: it (pointwise) optimally balances avoidance and goal attainment, is well defined if the system leaves the safe set, and allows one to generalize APFs to nonlinear control systems.   From a comparative perspective, we begin with simple examples to illustrate the beneficial properties of CBFs vs. APFs, followed by high-fidelity simulations of a quadrotor for different obstacle avoidance scenarios.  Finally, these same scenarios are carried out experimentally on a quadrotor with onboard sensing.  
%These comparisons suggest that CBFs outperform APFs. 
We conclude from these comparisons that CBFs outperform APFs in the context of providing smooth behaviors that are minimally invasive while guaranteeing obstacle avoidance.

% In the latter case, we show in simulation and on hardware that CBFs outperform APFs in the context of providing smooth behaviors that are minimally invasive while guaranteeing obstacle avoidance. 

% The major contribution of this work is a comparative analysis of CBFs and APFs---both theoretically and through simulation and experimental results on obstacle avoidance.  At a formal level, we establish that APFs can be used to synthesize a specific instance of a CBF, an APF-CBF, thereby showing that CBFs are more general than APFs.  Additionally, through this translation the APF-CBF enjoys additional beneficial properties: it more explicitly balances avoidance and goal attainment, is well defined if the system leaves the safe set, and allows one to generalize APFs to nonlinear control systems.   From a comparative perspective, we begin by showing that even in the case of a single integrator, CBFs have beneficial properties when compared to APFs.  This comparison is further explored via high-fidelity simulations of a quadrotor over 5 different obstacle avoidance scenarios.  Finally, these same scenarios are carried out experimentally on a quadrotor with onboard sensing.    

% The major contribution of this work, other than the comparison of CBFs and APFs, is the formulation of a general artificial potential field as a control barrier function. Not only does this result in a smoother behavior than the APF alone, but it highlights the broadness of control barrier functions as an obstacle avoidance method. 

The layout of the paper is as follows. Section \ref{sec:background} provides an overview of the APF and CBF methods, and illustrates their differences via an example of obstacle avoidance with single integrator dynamics. Section \ref{sec:pf_as_cbf} further details control barrier functions, and presents the main theoretic results of the paper including the fact that APFs are a special case of CBFs.  
Section \ref{sec:dynamics} presents simulation results with velocity-based controllers obtained from CBFs and APFs, which are compared in a series of scenarios.  Finally, Section \ref{sec:hardware} showcases the hardware results realizing APFs and CBFs on a quadrotor utilizing only onboard sensing and computation. 

% showcases how these model-free methods perform in the presence of real dynamics, in the form of a double integrator and a quadrotor tracking the desired velocities in simulation. Finally, Section \ref{sec:hardware} showcases the hardware results on a quadrotor utilizing only onboard sensing and computation.

\newpage

\section{Background \& Motivation}
\label{sec:background}

In this section, we give a brief background on artificial potential fields and control barrier functions, and illustrate their similarities and differences via an example associated with obstacle avoidance for a single integrator in the plane.  This comparison will be formalized in the next section, where more general forms of potential fields and control barrier functions will be considered.  Importantly, as the motivation in this section suggestions, we will see that control barrier functions are a generalization of potential fields. 

% More specifically, we consider the dynamical system:
% \begin{equation}
%     \dot{x} = v,
% \end{equation}
% with $x \in \R^2$ is the position, and $v \in \R^2$ is the velocity, which is also the input to the system.

\subsection{Artificial Potential Fields}

% Artificial Potential Fields, or APFs, have been utilized for over thirty years in the context of real-time obstacle avoidance.  They were first introduced in the seminal paper by Khatib \cite{khatib1986real}, and has since been utilized extensively \cite{?}.  A variety of potential functions can be utilized, but this section will look at the original formulation. A more general form, as well as another specific formulation, will be observed in the following sections.

We begin by considering artificial potential fields (APFs) in the setting of obstacle avoidance. 
A variety of potential functions can be utilized, but this section will look at the original formulation \cite{khatib1986real}.  
%  A more general form, as well as another specific formulation, will be observed in the following sections.
In this context, consider a control system described by a single integrator: 
\begin{eqnarray}
\label{eqn:singleint}
    \dot{x} = v,
\end{eqnarray}
with $x \in \R^n$ is the position, and $v \in \R^n$ is the velocity.  Here $v$ is viewed to be the control input to the system.  The goal is to synthesize a desired velocity profile that reaches a goal position while avoiding one or multiple obstacles.  The motivation for considering a single integrator is that the resulting behavior of this system can, for example, be utilized as desired velocity profiles for end-effector positions for a robot manipulator ($n = 3$) wherein classic Jacobian methods can be utilized
\cite{warren1989global}.  

%\cite{siciliano1990kinematic,xiang2010general,antonelli2006kinematic}.  

%\cite{siciliano1990kinematic,warren1989global}.  

% Consider a system that wishes to reach some goal position while avoiding obstacles. The general idea of potential fields is to move through space subject to an attractive force from the goal, while being repelled away from the obstacles.

To explicitly present artificial potential fields, per the original formulation in \cite{khatib1986real}, let $x_{\rm{goal}}$ the goal position.  This exerts an attractive potential to the system given by: 
\begin{equation}
\label{eq:khatib_attr}
    U_{\textrm{att}}(x) = \frac{1}{2}K_\textrm{att}\norm{x-x_\textrm{goal}}^2.
\end{equation}
% \begin{equation}
%     U_{\textrm{att}} = \frac{1}{2}K_\textrm{att}\norm{x-x_\textrm{goal}}^2.
% \end{equation}
Any obstacles in the area assert a repulsive potential, given by
\begin{equation}
\label{eq:khatib_rep}
    U_{\textrm{rep}}(x) = \begin{cases} 
      0 & \rho(x) > \rho_0 \\
      \frac{1}{2}K_\textrm{rep}\left( \frac{1}{\rho(x)} - \frac{1}{\rho_0}\right)^2 & \rho(x) \leq \rho_0
   \end{cases}
\end{equation}
where $\rho(x)$ is the distance to the obstacle or the distance from a safe region around the obstacle, e.g.: 
\begin{eqnarray}
\label{eqn:rho}
\rho (x) = \norm{x-x_\textrm{obs}} - D_{\textrm{obs}},
\end{eqnarray}
for $D_{\textrm{obs}} > 0$, and $\rho_0$ is the region of influence. The potential function is set to zero outside of this region to allow the attractive potential to dominate over large distances.

To obtain a feedback controller that pushes the system to the goal while avoiding obstacles, the attractive and repulsive potentials are combined and the gradient is taken: 
\begin{equation}
F_\textrm{APF}(x) = -\nabla U_\textrm{att}(x) -\nabla U_\textrm{rep}(x).
\end{equation}
with $\nabla U_{-}(x) = \frac{\partial U_{-}}{\partial x}(x)^T$ and:
\begin{align}
    \label{eq:khatib_attr_grad}
    \nabla U_\textrm{att}(x) &= K_{\textrm{att}}(x-x_{\textrm{goal}}) \\
    \nabla U_\textrm{rep}(x) &= \frac{K_{\textrm{rep}}}{\rho(x)^2}\left( \frac{1}{\rho(x)} - \frac{1}{\rho_0}\right)\frac{(x-x_{\textrm{goal}})}{\rho(x)}.
\end{align}
For a single integrator \eqref{eqn:singleint}, where one directly controls velocity, we simply apply this force as the velocity input: 
$$
\dot{x} = F_{\textrm{APF}}(x)
$$
yielding a gradient dynamical system with respect to the attractive and repulsive potentials. 

%with $v = F_{\textrm{APF}}$

\begin{example}
\label{ex:APF}
Consider a mobile robot modeled as a single integrator travelling in the plane: $n = 2$. The initial position is $x_0 = (0,0)$, and the goal position is $x_{\textrm{goal}} = (3,5)$. There are two obstacles, at $x_{O1} = (1,2)$ and $x_{O2} = (2.5,3)$, that the mobile robot must not come within 0.5 meters of these obstacles. 

The new distance function for obstacle $i$ is $\rho = \norm{x - x_{Oi}} - 0.5$. Using $K_{\textrm{att}} = K_\textrm{rep} = 1$, with varying values of $\rho_0$, we have the result shown in Figure \ref{fig:pf}(a).  As can be seen in this figure, the potential field works well at $\rho_0$ values of 1 and 0.25, but it gets stuck in a local minimum at $\rho_0 = 0.5$, and oscillations start to occur at $\rho_0 = 0.1$.  Potential fields, therefore, work well when properly tuned, but can suffer from oscillations---due to the interaction between the attractive and repulsive forces---when not tuned properly. 
\end{example}

\begin{figure*}[t]
    \centering
    \begin{subfigure}[t]{0.32\textwidth}
     \includegraphics[width=1\columnwidth,trim= 35 0 50 0,clip]{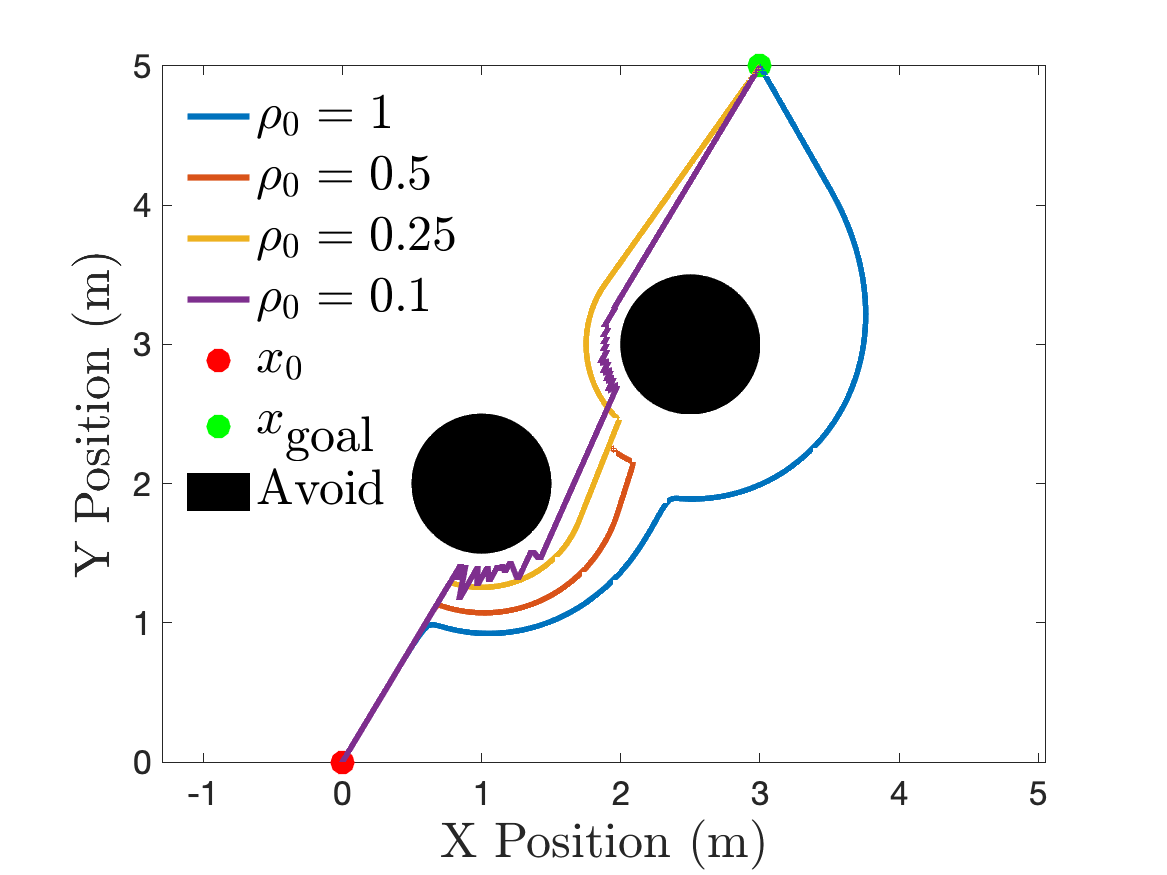}
    \caption{Artificial Potential Field}
    \end{subfigure}
    \begin{subfigure}[t]{0.32\textwidth}
    \includegraphics[width=1\columnwidth,trim= 35 0 50 0,clip]{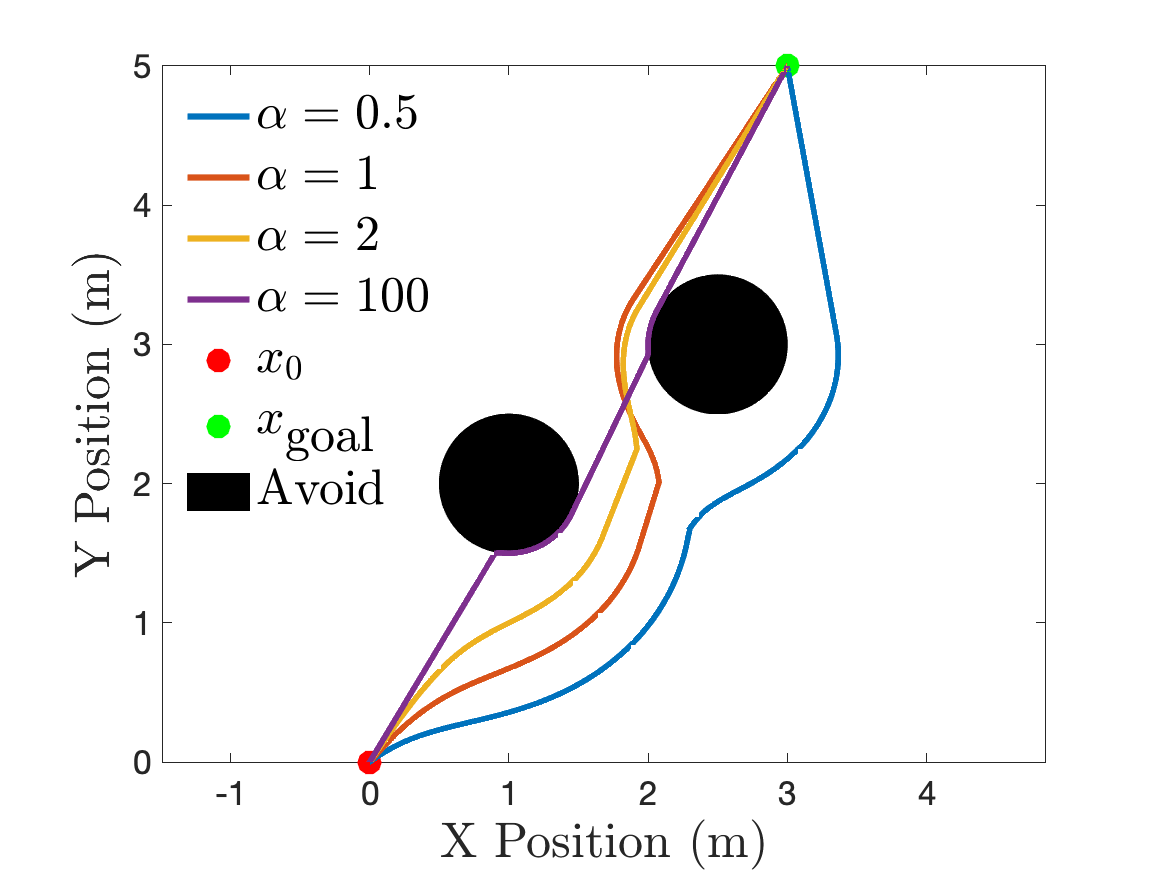}
    \caption{Control Barrier Function}
    \end{subfigure}
    \begin{subfigure}[t]{0.32\textwidth}
    \includegraphics[width=1\columnwidth,trim= 35 0 50 0,clip]{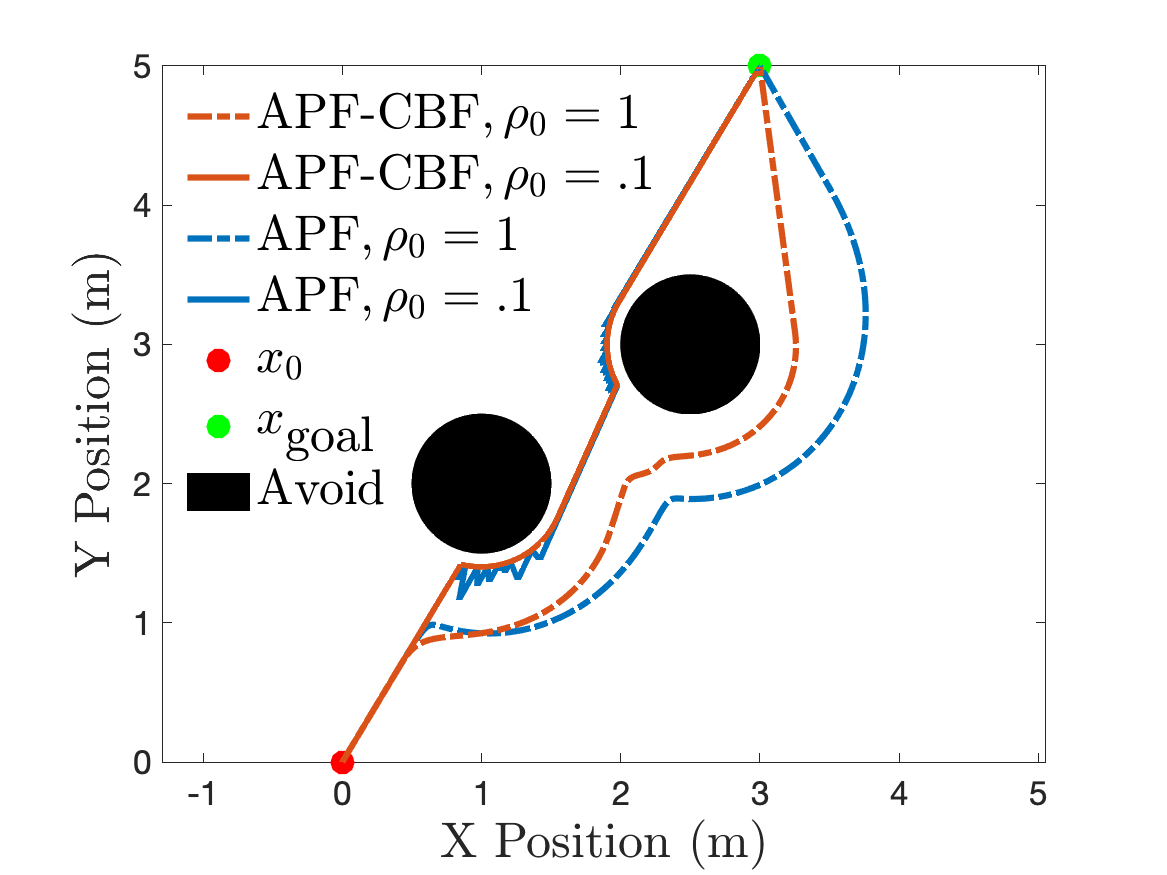}
    \caption{APF-based CBF}
    \end{subfigure}
    \caption{Comparison of potential fields and control barrier functions at various values of $\rho_0$ and $\alpha$}
    \label{fig:pf}
\end{figure*}

%\newpage

\subsection{Motivating Control Barrier Functions}
\label{sec:CBFmotivation}

To motivate control barrier functions, we again consider the the single integrator in \eqref{eqn:singleint}.  For this system, we wish to formulate safety-critical controller synthesis.  In this context, consider a safety function $h : \R^n \to \R$ defining a safe set: 
$$
\mathcal{S}  = \{ x \in \R^n ~ : ~ h(x) \geq 0\}.  
$$
That is, the system is ``safe'' when the function $h$ is positive.  One can view this set as the complement of the obstacles.  That is, one can utilize $\rho$ and define: 
$$
h(x) = \rho(x) =  \norm{x-x_\textrm{obs}} - D_{\textrm{obs}} \geq 0 
$$
where now the safe set $\mathcal{S}$ is the set we wished to render safe with the potential fields. 

% For example, $h(x) = x$ implies that the system should evolve in the set defined by $x \geq 0$, i.e., the positive real line. 

It is not possible to directly ensure safety of the system by simply checking on the positivity of $h$.  Therefore, we can instead consider a derivative condition on $h$ that can be checked instantaneously---this is analogous to potential fields wherein one looks at the gradient of the potential functions.  
This lead to the recent introduction of control barrier functions \cite{ames2014control,ames17} (see \cite{ames2019control} for a more detailed history).   The function $h$ is a \emph{control barrier function} if 
for all $x$ (in the domain of interest) there exists a $v$ such that:
\begin{align}
\label{eqn:hdotsimple}
\dot{h}(x,v) =  \nabla h(x)^T \dot{x} =
 \nabla h(x)^T  v \geq - \alpha h(x)
\end{align}
for $\alpha > 0$.  In this case, given an input $v(x)$ that satisfies this inequality, consider the closed loop system $\dot{x} = v(x)$ with solution $x(t)$ and initial condition $x(0) = x_0$ wherein, because $v(x)$ satisfies \eqref{eqn:hdotsimple} the set $\mathcal{S}$ is \emph{forward invariant}, i.e., \emph{safe}:
\begin{eqnarray}
\label{eqn:safetysimple}
x(0) = x_0 \in \mathcal{S} \qquad \Rightarrow \qquad 
x(t) \in \mathcal{S} \quad \forall ~ t \geq 0. 
\end{eqnarray}
Thus, the existence of a control barrier function implies safety. 
For example, one could verify that the potential field controller yields safe behavior by verifying that, for $h(x) = \rho(x)$, 
$$
\dot{h}(x,F_{\textrm{APF}}(x)) \geq - \alpha h(x). 
$$

Control barrier functions can be used to synthesize controllers that ensure safety, as defined in \eqref{eqn:safetysimple}, even given a desired velocity: $v_{\rm des}(x,t)$.  
Specifically, the input $v \in \R^n$ can be explicitly found that satisfies the inequality in \eqref{eqn:hdotsimple} if $\nabla h(x) \neq 0$, i.e., if $h$ has \emph{relative degree} $1$ subject to minimizing the difference between this input and and the desired input.  This can be framed as an optimization problem, and specifically a quadratic program (QP): 
\begin{align}
\label{eqn:QPsimple}
v^*(x,t) = \underset{v \in \R^n}{\operatorname{argmin}} & ~  ~ \| v - v_{\rm des}(x,t) \|^2 \\
\mathrm{s.t.} & ~  ~ \nabla h(x)^T v \geq - \alpha h(x), \nonumber
\end{align}
where $v^*(x,t)$ is the pointwise optimal controller.  This is an important and substantial divergence from potential fields in that gradients are no longer used for synthesis.  Rather, one can optimize over controllers that satisfy the safety constraint.  To see how this difference manifests itself, we return to Example \ref{ex:APF} but, instead, apply control barrier functions. 

\begin{example}
\label{ex:CBF}
Consider the same setup as Example \ref{ex:APF}. The desired velocity command is a simple $P$ controller on position,
\begin{equation}
    v_{\rm des}(x,t) = -K (x-x_{\textrm{goal}}),
\end{equation}
with $K = 1$. Note that this is equivalent to the $\nabla U_{\textrm{att}}$ from the previous example, as the attractive force functions as a P controller on position.
The control barrier function, as inspired by $\rho$ in \eqref{eqn:rho}, is given by: 
\begin{equation}
\label{eqn:hex}
h(x) = \underset{i \in \{1,2 \}}{\min} \norm{x - x_{Oi}} -  D_{\textrm{obs}},
\end{equation}
where $\nabla h(x) = \frac{x - x_{Oi}}{\norm{x - x_{Oi}}} $, for the closer obstacle $i$, and here we pick $ D_{\textrm{obs}} = 0.5$.  Note that, technically, this barrier function is non-smooth, but the methods from \cite{glotfelter2017nonsmooth} can be employed.  Practically, the obstacles are distances so no issues with continuity are encountered. 

The simulation is run for varying values of $\alpha$, and the results are shown in Figure \ref{fig:pf}(b). For all values of $\alpha$, the robot safely completes the mission and suffers from no oscillations.  Additionally, one can see the minimally invasive behavior of \eqref{eqn:QPsimple} in that the nominal trajectories to goal are modified to a much smaller degree when compared against potential fields. 
\end{example}

\section{Potential Fields as Control Barrier Functions}
\label{sec:pf_as_cbf}
% In this section, we show that potential fields can be implemented as a control barrier function, and we demonstrate the results on the single integrator in the plane. First, we begin by generalizing the potential functions.

In this section, we show the main result of this paper: that potential fields are a specific instance of control barrier functions.  This will be demonstrated by explicitly constructing a CBF from a potential field.  Importantly, this transformation results in additional beneficial properties that the original controller did not benefit from.  Thus, control barrier functions generalize potential fields.  Subsequent sections will practically demonstrate this in simulation and on hardware. 

% that potential fields can be implemented as a control barrier function, and we demonstrate the results on the single integrator in the plane. First, we begin by generalizing the potential functions.

\subsection{Control Barrier Functions}

We begin by introducing the general definition of control barrier functions, for which the constructions in Section \ref{sec:CBFmotivation} are a special case.  
In particular, the advantage of CBFs is that they can be applied to general nonlinear control systems.  Consider a nonlinear system of the form: 
\begin{align}
\label{eqn:controlsys}
    \dot{x} = f(x) + g(x) u.
\end{align}
with state $x \in \R^n$, input $u \in \R^m$, and $f,g : \R^n \to \R^n$ assumed to be Lipschitz continuous. 

\begin{definition}[\cite{ames17}]
\label{def:cbf}
Let $\S \subset \R^n$ be the set defined by a continuously differentiable function $h: \R^n \to \R$:
\begin{eqnarray}
\S & = & \{ x \in \R^n ~ : ~ h(x) \geq 0 \} , \nonumber\\
\partial \S & = & \{ x \in \R^n ~ : ~ h(x) = 0 \}, \nonumber\\
\mathrm{Int}(\S) & = & \{ x \in \R^n ~ : ~ h(x) > 0 \}. \nonumber
\end{eqnarray}
Then $h$ is a \textbf{control barrier function (CBF)} if $\nabla h(x)\neq 0$ for all $x\in\partial \S$ and there exists an \emph{extended class $\classK$ function} (\cite[Definition 2]{ames17})
$\alpha$ such that for all 
$ x \in \S$, $\exists u$ s.t. 
\begin{align}
\label{eqn:cbf:definition}
%\exists ~  u \quad \textrm{such that} \quad 
\underbrace{L_f h(x) + L_g h(x) u}_{\dot{h}(x,u)} \geq - \alpha(h(x)).
\end{align}
where $L_f h(x) : = \nabla h(x)^T f(x)$ and $L_g h(x): = \nabla h(x)^T g(x)$. 
\end{definition}

The main control barrier function result is that this class of functions give sufficient (and necessary) conditions on set invariance, i.e., safety of the system relative to $\mathcal{S}$. 

\begin{theorem}[\cite{ames17}]
\label{thm:cbf}
Given a control barrier function $h: \R^n \to \R$ together with the associated set $\mathcal{S}$, for any Lipschitz continuous controller satisfying: 
$$
\dot{h}(x,u(x)) = L_f h(x) + L_g h(x) u(x) \geq - \alpha(h(x)), 
$$
the set $\S$ is forward invariant, i.e,. safe.  Additionally, the set $\S$ is asymptotically stable. 
\end{theorem}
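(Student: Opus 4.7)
The plan is to prove forward invariance and asymptotic stability separately, both via a comparison-lemma argument applied to the scalar signal $\eta(t) := h(x(t))$ along closed-loop trajectories. Since $u$ is Lipschitz and $f,g$ are Lipschitz, the closed-loop vector field is Lipschitz, so solutions exist and are unique on some maximal interval, and $\eta$ is absolutely continuous with $\dot\eta(t) = L_f h(x(t)) + L_g h(x(t))\,u(x(t)) \geq -\alpha(\eta(t))$ by hypothesis.

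For forward invariance, I would compare $\eta$ to the solution $y(t)$ of the scalar initial value problem $\dot y = -\alpha(y)$, $y(0) = \eta(0) \geq 0$. Because $\alpha$ is an extended class $\classK$ function with $\alpha(0)=0$, the value $y=0$ is an equilibrium, so $y(t) \geq 0$ for all $t \geq 0$ by uniqueness of solutions to the scalar ODE. The standard differential-inequality comparison lemma (e.g., Khalil Lemma 3.4) then yields $\eta(t) \geq y(t) \geq 0$ for all $t$ in the interval of existence, which proves $x(t) \in \S$ and, by boundedness of $h$ on sublevel sets plus the safe behaviour, extends the solution for all $t \geq 0$. This gives forward invariance of $\S$.

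For asymptotic stability of $\S$, I would first note that Lyapunov stability follows from a slight refinement of the above: for any neighbourhood of $\S$, trajectories starting sufficiently close cannot leave it because $\dot\eta \geq -\alpha(\eta)$ pushes $\eta$ back toward the non-negative region whenever $\eta < 0$. To get attractivity, I would consider a trajectory with $\eta(0) < 0$ and again compare with $\dot y = -\alpha(y)$, now using that for $y<0$ we have $-\alpha(y) > 0$, so $y = 0$ is an asymptotically stable equilibrium from the left (this is where the extended class $\classK$ assumption is doing work, not just class $\classK$). Hence $y(t) \to 0$ as $t \to \infty$, and $\eta(t) \geq y(t)$ combined with the upper bound $\eta(t) \leq 0$ (which holds until $\eta$ reaches the boundary) forces $\eta(t) \to 0$.

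The main obstacle I anticipate is the asymptotic stability argument: the comparison lemma in its usual form bounds $\eta$ from below by $y$, which is enough to conclude $\eta(t) \to 0$ from below, but one must argue carefully that once $\eta$ crosses zero the forward-invariance result takes over, so the full trajectory asymptotically approaches $\partial\S$ and then remains in $\S$. A clean way to handle both regimes simultaneously is to invoke a $\classKL$-estimate for the comparison ODE on all of $\R$ (valid precisely because $\alpha$ is extended class $\classK$) and then transfer it to $\eta$, yielding $|\eta(t)|_{\S} \leq \beta(|\eta(0)|_{\S}, t)$ for some $\beta \in \classKL$, which packages Lyapunov stability and attractivity in one inequality. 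Ensuring the domain-of-definition issues do not bite (no finite escape before reaching $\S$) is a routine but necessary technicality.
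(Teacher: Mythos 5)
The paper does not prove this theorem at all---it is quoted verbatim from the cited reference \cite{ames17} and used as a black box---so there is no in-paper proof to compare against. Your comparison-lemma argument (bounding $\eta(t)=h(x(t))$ below by the solution of $\dot y=-\alpha(y)$ to get forward invariance, and using that $-\alpha(y)>0$ for $y<0$ to get attractivity of $\S$, packaged as a $\classKL$ estimate) is essentially the standard proof given in that reference and in the survey \cite{ames2019control}, and it is correct; the only nitpick is that uniqueness of solutions to $\dot y=-\alpha(y)$ requires $\alpha$ to be locally Lipschitz, which an extended class $\classK$ function need not be, but the sign argument ($\dot y>0$ whenever $y<0$) already rules out crossing zero without invoking uniqueness.
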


Since the constraint \eqref{eqn:cbf:definition} is affine in $u$, the above definition can be used to construct a quadratic program that functions as a safety filter, guaranteeing the safety of the system by enforcing it to stay inside of $\S$. This quadratic program is given by: 
\begin{align}
\label{eq:cbfqp}
\tag{CBF-QP}
u^*(x) = \underset{u \in \R^{m}}{\operatorname{argmin}} &  \quad  \| u - u_{\rm des}(x,t) \|^2  \\
\mathrm{s.t.} &  \quad L_f h(x) + L_g h(x) u \geq - \alpha(h(x))  \nonumber
\end{align}
Importantly, this QP has an explicit solution given by: 
\begin{eqnarray}
\label{eqn:minnormexplicit}
~  u^*(x,t)  =  u_{\rm des}(x,t) + u_{\rm safe}(x,t) 
\end{eqnarray}
where $u_{\rm safe}$ is added to $u_{\rm des}$ if the nominal controller would not keep the system safe, which is determined by the sign of $\Psi(x,t ; u_{\rm des}) := \dot{h}(x,u_{\rm des}(x,t))  + \alpha(h(x))$ via: 
\begin{gather}\label{eq:firstsolutiontoqp}
u_{\rm safe}(x,t) =  \left\{ 
\begin{array}{lcr}
- \frac{L_g h(x)^{T}}{L_g h(x) L_g h(x)^T} \Psi(x,t ;u_{\rm des}) & \mathrm{if~} \Psi < 0 \\
0 & \mathrm{if~} \Psi \geq 0
\end{array}
\right.  %\nonumber 
\end{gather}
Through this explicit form, one can begin to see the divergence between potential fields and CBFs: the condition statement indicates that safety is imposed only when necessary rather than always (via the addition of potentials).

\subsection{Potential Fields as a Special Case of CBFs}

We now present the main result of the paper: showing that from a potential field one obtains a control barrier function.  Importantly, one can use this understanding of potential fields to obtain additional beneficial properties of a given potential field: from safe set stability to the generalization of potential fields to general nonlinear control systems.   

\begin{definition}
\label{definition:pf_general}
%Consider a single integrator $\dot{x} = v$ as in \eqref{eqn:singleint}.  
Given a goal state, $x_{\rm goal} \in \R^n$, an \textbf{attractive potential} is a positive definite continuously differentiable function $U_{\rm att} : \R^n \to \R$, such that there exists $\underline{c}, \overline{c} > 0$ such that, $\forall ~ x \in \R^n$:
$$
\underline{c} \| x - x_{\rm goal} \|^2 \leq 
U_{\rm att}(x) \leq 
\overline{c} \| x - x_{\rm goal} \|^2. 
$$
Given an obstacle at $x_{\rm obst}$ and minimum distance $D_{\rm obst} > 0$, a \textbf{repulsive potential} is a continuously differentiable positive semi-definite function $U_{\rm rep}: \R^n \to \R$, strictly increasing, that ``blows up'' at the minimum distance:
\begin{align*}
\textrm{Positive semi-definite:} & \quad  U_{\rm rep}(x) \geq 0, \qquad  \\
\textrm{Strictly increasing:} & \quad  \nabla U_{\rm rep}(x) > 0 ~ \mathrm{if} ~
\| x - x_{\rm obst}\| \leq \rho_0 \\
\textrm{``Blows up'' at obstacle:} & \quad  \lim_{\| x - x_{\rm obst}\| \to D_{\rm obst}} U_{\rm rep}(x) = \infty. 
\end{align*}
An \textbf{artificial potential field}: $U(x) := U_{\rm att} (x) + U_{\rm rep}(x)$, yields a controller: 
$$
k(x) = - \nabla U(x)  \qquad \Rightarrow \qquad 
\dot{x} = - \nabla U(x). 
$$
\end{definition}

Note that the potential functions used in the previous section meet this definition.

% As mentioned previously, an artificial potential field combines these attractive and repulsive forces into a single function: $U(x) = U_{\rm att}(x) + U_{\rm rep}(x)$. One then synthesizes a controller: 
% $$
% k(x) = - \nabla U(x)  \qquad \Rightarrow \qquad 
% \dot{x} = - \nabla U(x), 
% $$
% where the gradient is just the Jacobian: $\nabla U(x) = J_U(x)$. 

\newsec{Main result.} The CBF paradigm includes potential fields as a special case.  In this case, rather than combining the attractive and repulsive potentials into a single function, we utilize the attractive potential as the desired velocity, and the repulsive potential as the control barrier function. 

\begin{theorem}
\label{thm:main}
Consider an artificial potential field with repulsive potential $U_{\rm{rep}}$ meeting Definition \ref{definition:pf_general}. The function: 
\begin{equation}
\label{eqn:hfromAPF}
    h(x) = \frac{1}{1 + U_{\rm rep}(x)} - \delta,
\end{equation}
with $\delta \in (0,\delta_0)$ a small constant, is a control barrier function for the single integrator: $\dot{x} = v$.  Additionally, given any feedback controller $v = k(x)$ satisfying: 
$$
\dot{h}(x,k(x)) = \nabla h(x)^T k(x) \geq - \alpha(h(x))
$$
the set 
$$
\mathcal{S} = \{ x \in \R^n  : h(x) \geq 0 \} \subset 
\{  x \in \R^n : \| x - x_{\rm obst}\| \geq D_{\rm obst}\}
$$
is forward invariant, i.e., safe, and asymptotically stable. 
\end{theorem}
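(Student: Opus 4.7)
The plan is to verify each clause of Definition \ref{def:cbf} for the function $h$ in \eqref{eqn:hfromAPF} with the single-integrator dynamics $f(x)=0$, $g(x)=I$, and then invoke Theorem \ref{thm:cbf} to conclude safety and asymptotic stability. Concretely, I must check (i) that $h$ is $C^1$ and its zero super-level set is well-behaved, (ii) the geometric inclusion $\mathcal{S} \subset \{x : \|x-x_{\rm obst}\|\geq D_{\rm obst}\}$, (iii) that $\nabla h(x)\neq 0$ on $\partial\mathcal{S}$, and (iv) that an admissible $v$ exists for every $x\in\mathcal{S}$ with some extended class-$\classK$ function $\alpha$.

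For (i), since Definition \ref{definition:pf_general} gives $U_{\rm rep}(x)\geq 0$, the denominator satisfies $1+U_{\rm rep}(x)\geq 1>0$ on all of $\R^n$, so $h$ is continuously differentiable with
\[
\nabla h(x) \;=\; -\,\frac{\nabla U_{\rm rep}(x)}{(1+U_{\rm rep}(x))^2}.
\]
For (ii), note that $h(x)\geq 0$ is equivalent to $U_{\rm rep}(x)\leq \tfrac{1}{\delta}-1$. The blow-up property in Definition \ref{definition:pf_general} guarantees that any sublevel set of $U_{\rm rep}$ is bounded away from the obstacle boundary $\{\|x-x_{\rm obst}\|=D_{\rm obst}\}$. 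This is precisely what pins down $\delta_0$: it should be taken so that $\{U_{\rm rep}\leq 1/\delta_0-1\}$ is a nonempty set strictly contained in the region where $\|x-x_{\rm obst}\|>D_{\rm obst}$ and also inside the region of influence $\|x-x_{\rm obst}\|\leq \rho_0$ where $\nabla U_{\rm rep}$ is nonzero. Then every $\delta\in(0,\delta_0)$ inherits both properties by monotonicity of sublevel sets.

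For (iii), on $\partial\mathcal{S}$ we have $U_{\rm rep}(x)=1/\delta-1>0$ (for $\delta<1$), and by the choice of $\delta_0$ above this level set lies in the region where $\nabla U_{\rm rep}(x)\neq 0$ by the ``strictly increasing'' clause. Hence $\nabla h(x)\neq 0$ on $\partial\mathcal{S}$. For (iv), the CBF inequality \eqref{eqn:cbf:definition} becomes $\nabla h(x)^T v \geq -\alpha(h(x))$; pick any extended class-$\classK$ function $\alpha$. At points where $\nabla h(x)\neq 0$, a large-magnitude $v$ aligned with $\nabla h(x)$ trivially satisfies the bound; at interior points where $\nabla h(x)=0$, one has $h(x)>0$ (zero-gradient points cannot lie on $\partial\mathcal{S}$ by (iii)), so $-\alpha(h(x))<0$ and $v=0$ works. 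This establishes that $h$ is a CBF for the single integrator, and Theorem \ref{thm:cbf} then yields forward invariance and asymptotic stability of $\mathcal{S}$ under any Lipschitz $k$ satisfying the CBF inequality.

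The main obstacle I anticipate is making the constant $\delta_0$ rigorous: the statement leaves it implicit, but the argument above actually needs $\delta_0$ to simultaneously (a) ensure the $(1/\delta-1)$-sublevel set of $U_{\rm rep}$ is nonempty, (b) keep it strictly outside $\{\|x-x_{\rm obst}\|<D_{\rm obst}\}$ via the blow-up property, and (c) keep it inside the region of influence $\{\|x-x_{\rm obst}\|\leq \rho_0\}$ where $\nabla U_{\rm rep}\neq 0$ so that $\nabla h$ does not vanish on $\partial\mathcal{S}$. Once this three-fold choice is made precise (which is essentially a continuity/intermediate-value argument using the blow-up and strictly-increasing hypotheses), the remaining verification is mechanical.
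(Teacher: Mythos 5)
Your proposal is correct and follows essentially the same route as the paper: compute $\nabla h$, use the blow-up property to choose $\delta_0$ so that the zero level set of $h$ lies within the region of influence where $\nabla U_{\rm rep}$ is nonzero, observe that the single-integrator CBF inequality is always satisfiable, and invoke Theorem \ref{thm:cbf}. One small wording slip: in your step (ii) it is the \emph{level} set $\{U_{\rm rep} = 1/\delta - 1\}$, not the sublevel set $\{U_{\rm rep}\le 1/\delta-1\}$ (which contains all points far from the obstacle where $U_{\rm rep}=0$), that must lie inside $\{\|x-x_{\rm obst}\|\le\rho_0\}$ --- you use the correct version in step (iii), and your explicit treatment of the $\nabla h(x)=0$ interior case is in fact more careful than the paper's.
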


\begin{remark}
Note that the set $\mathcal{S}$ depends on the choice of $\delta$ with:
$$
\lim_{\delta \to 0} \mathcal{S} = \{  x \in \R^n : \| x - x_{\rm obst}\| \geq D_{\rm obst}\}. 
$$
Thus, the smaller the choice of $\delta$, the closer the safe set, $\mathcal{S}$, to the complement of the obstacles.  Additionally, unlike potential fields, in the case when the system starts with an initial condition outside of $\mathcal{S}$ the CBF will both be well defined and the system will asymptotically stabilize back to $\mathcal{S}$. 
\end{remark}

\begin{proof}
Taking the gradient of $h(x)$ yields
\begin{equation*}
\nabla h(x) = - \frac{\nabla U_{\rm{rep}}(x)}{(1 + U_{\rm rep}(x))^2}. 
\end{equation*}
To be a control barrier function, we first require that $\nabla U_{\rm{rep}} \neq 0$ when $h(x) = 0$. When $h(x) = 0$, we have that
\begin{align*}
    \frac{1}{1+U_{\rm{rep}}(x)} - \delta = 0 
    \quad \implies  \quad 
    U_{\rm{rep}}(x) = \frac{1}{\delta} - 1
\end{align*}
Since $U_{\rm{rep}}$ ``blows up'' with proximity to obstacles, 
one can pick $\delta_0 > 0$ such that for all $\delta \in (0,\delta_0)$ it follows that $h(x) = 0$ implies that $\| x - x_{\rm obst} \| \leq \rho_0$. Therefore, by the strictly increasing assumption, $\nabla U_{\rm{rep}} \neq 0$. 

% inside of the region of influence $\rho_0$, and is zero outside of it, we have that $U_{\textrm{rep}} > 0 \implies \nabla U_{\textrm{rep}} \neq 0$. Thus, can use any $\delta \in (0,1)$.

For the single integrator dynamics, the CBF requirement given in Equation \eqref{eqn:cbf:definition} is trivially met, as $L_f h(x) = 0$ and $L_g h(x) =  \nabla h(x)^T v$, so there always exists a velocity such that  $- \frac{\nabla U_\textrm{rep}v}{(1 + U_{\rm rep}(x))^2} \geq - \alpha(h(x))$.  Therefore, $h$ is a CBF and the remaining statements follow from Theorem \ref{thm:cbf}. 
\end{proof}

\newsec{Controller synthesis.}
% Now, by using $- \nabla U_\textrm{att}(x)$ as the desired velocity, we have
The advantage of CBFs is that they allow for controller synthesis where the attractive and repulsive potentials are combined in a pointwise optimal fashion. 
Specifically, using $- \nabla U_\textrm{att}(x)$ as the desired velocity, we have:
\begin{align}
\label{eqn:QP_CBF_PF}
v^*(x) = \underset{v \in \R^n}{\operatorname{argmin}} & ~  ~ \| v + 
\nabla U_{\rm att}(x) \|^2 \\
\mathrm{s.t.} & ~  ~  - \frac{\nabla U_{\rm{rep}}^T v}{(1 + U_{\rm rep}(x))^2} \geq - \alpha(h(x)). \nonumber
\end{align}
Importantly, this controller has an explicit solution: 
\begin{gather}
\label{eqn:exp_CBF_PF}
\begin{split}
 ~  v^*(x)  = -\nabla U_{\rm{att}}(x) + 
%\qquad 
\left\{ 
\begin{array}{lcr}
- \frac{\nabla h(x)}{\nabla h(x)^T \nabla h(x)} \Psi(x ; U_{\rm{att}}) 
&  \hspace{-.2cm}  \mathrm{if~} \Psi < 0 \\
0 & \hspace{-.2cm}   \mathrm{if~} \Psi  \geq 0
\end{array}
\right.  
\end{split}
\end{gather}
for $\Psi = \Psi(x ; U_{\rm{att}}) := - \nabla h(x)^T \nabla U_{\rm{att}}(x)  + \alpha(h(x))$.

Note the parallels between this function and the original artificial potential field, where the repulsive potential only plays a difference when within a certain radius $\rho_0$. Now, the attractive potential is used unless $-\nabla h(x)^T \nabla U_{\rm{att}}  + \alpha(h(x)) < 0$, in which case the CBF minimally alters the velocity inputs in order to maintain safety.

\begin{example}
Consider the APF given in Example \ref{ex:APF}. The repulsive potential is given by \eqref{eq:khatib_rep}, which is used to make the barrier function \eqref{eqn:hfromAPF}, with value $\delta = 0.001$. The attractive potential given by \eqref{eq:khatib_rep} with gradient \eqref{eq:khatib_attr_grad} is used in the desired velocity controller. 

By applying the APF-CBF QP given in \eqref{eqn:QP_CBF_PF} or the explicit solution \eqref{eqn:exp_CBF_PF}, the path shown in \ref{fig:pf}(c) is obtained for tuning parameters $K_{\textrm{rep}} = K_{\textrm{att}} = \alpha = 1$. 
One can see improved performance for the APF-CBF QP obtained from the APF when compared against the original APF, wherein the APF-CBF QP gets closer to the obstacles with fewer oscillations.  Moreover, when the robot has passed the obstacle and is moving towards the goal, the APF-CBF converges more quickly to the desired path.
\end{example}

\newsec{Extending APFs to nonlinear systems.}  It is important to note that the connection between APFs and CBFs allows for potential fields to be generalized to a nonlinear setting with ease. 
In particular, since CBFs are defined general nonlinear control systems, as in \eqref{eqn:controlsys}, we can use the instantiation of APFs as CBFs to easy extend APFs to a nonlinear setting.  

\begin{proposition}
\label{prop:nonlinear}
Consider a nonlinear control system of the form: $\dot{x} = f(x) + g(x) u$.  Assume the existence of a potential field as given in Definition \ref{definition:pf_general} with the associated safety constraint, $h$, given in \eqref{eqn:hfromAPF}.  If the repulsive potential, $U_{\rm rep}$, satisfies the CBF condition: 
$$
\nabla U_{\rm{rep}}(x)^T g(x) = 0 \quad \Rightarrow \quad
-\frac{\nabla U_{\rm{rep}}(x)^T f(x) }{(1 + U_{\rm rep}(x))^2}
\geq - \alpha(h(x)) ,
% \nabla U_\textrm{rep}(x)^T f(x) \geq 
% - \alpha(h(x)) (1 + U_{\rm rep}(x))^2
$$
for some extended class $\mathcal{K}$ function $\alpha$ then the controller:
\begin{align}
\label{eqn:QP_CBF_PF2}
u^*(x) = \underset{v \in \R^n}{\operatorname{argmin}} & ~  ~ \| u + 
\nabla U_{\rm att}(x) \|^2 \\
\mathrm{s.t.} & ~  ~  - \frac{\nabla U_{\rm{rep}}^T (f(x) + g(x) u) }{(1 + U_{\rm rep}(x))^2} \geq - \alpha(h(x)). \nonumber
\end{align}
renders the set $\{ x \in \R^n : \| x - x_{\rm obst} \|  \geq D_{\rm obst} \}$ forward invariant, i.e., a controller that ensures safety. 
\end{proposition}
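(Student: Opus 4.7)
The plan is to reduce the claim to Theorem \ref{thm:cbf} by verifying that $h(x) = \frac{1}{1+U_{\rm rep}(x)} - \delta$ is a valid control barrier function for the nonlinear dynamics $\dot{x} = f(x) + g(x)u$ in the sense of Definition \ref{def:cbf}. Once $h$ is shown to be a CBF, the QP in \eqref{eqn:QP_CBF_PF2} is pointwise feasible, its minimizer satisfies the CBF inequality, and Theorem \ref{thm:cbf} hands us forward invariance of $\mathcal{S} = \{x : h(x) \geq 0\}$. A short set-inclusion argument then promotes this to forward invariance of the obstacle-avoidance set $\{x : \|x - x_{\rm obst}\| \geq D_{\rm obst}\}$.

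First I would compute the Lie derivatives from $\nabla h(x) = -\nabla U_{\rm rep}(x)/(1+U_{\rm rep}(x))^2$, giving $L_f h(x) = -\nabla U_{\rm rep}(x)^T f(x)/(1+U_{\rm rep}(x))^2$ and $L_g h(x) = -\nabla U_{\rm rep}(x)^T g(x)/(1+U_{\rm rep}(x))^2$. The non-vanishing of $\nabla h$ on $\partial\mathcal{S}$ is inherited verbatim from the proof of Theorem \ref{thm:main}: for $\delta \in (0,\delta_0)$, the level set $\{h=0\}$ lies inside the region of influence $\|x-x_{\rm obst}\| \leq \rho_0$, where the strictly-increasing assumption forces $\nabla U_{\rm rep} \neq 0$.

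Next I would verify the existence-of-$u$ condition in Definition \ref{def:cbf} by a case split on $L_g h(x)$. When $L_g h(x) \neq 0$ the map $u \mapsto L_g h(x) u$ is surjective onto $\R$, so the affine inequality $L_f h(x) + L_g h(x)u \geq -\alpha(h(x))$ is satisfied by any $u$ sufficiently aligned with $L_g h(x)^T$. When $L_g h(x) = 0$ we lose control authority over $\dot{h}$, and this is exactly where the explicit hypothesis of the proposition is used: it asserts that the drift alone already majorizes $-\alpha(h(x))$, so the inequality holds for every $u$. This is the main obstacle of the proof — unlike the single-integrator setting of Theorem \ref{thm:main}, a general control-affine system need not be able to steer $h$ in the $\nabla h$ direction when $g$ is deficient there, and the hypothesis is precisely the standard quarantine used to rule out this pathology in the control-affine CBF framework.

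Finally, I would invoke Theorem \ref{thm:cbf} to conclude that the Lipschitz selector $u^*(x)$ returned by \eqref{eqn:QP_CBF_PF2} renders $\mathcal{S}$ forward invariant, and then close the gap to the advertised set with a one-line continuity argument: since $U_{\rm rep}(x) \to \infty$ as $\|x-x_{\rm obst}\| \to D_{\rm obst}$, any such $x$ has $h(x) = -\delta < 0$, hence $\mathcal{S} \subset \{x : \|x - x_{\rm obst}\| \geq D_{\rm obst}\}$, and forward invariance of $\mathcal{S}$ keeps trajectories inside this larger obstacle-free set as required.
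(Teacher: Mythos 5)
Your proposal is correct and follows essentially the same route as the paper: recognize the hypothesis as the standard ``$L_g h(x)=0 \Rightarrow L_f h(x) \ge -\alpha(h(x))$'' condition guaranteeing feasibility of the CBF inequality (and hence that $h$ is a CBF for the control-affine system), then invoke Theorem \ref{thm:cbf} together with the properties of $h$ established in Theorem \ref{thm:main} to get forward invariance of $\mathcal{S}$ and the inclusion into the obstacle-free set. Your write-up merely spells out the case split and the set inclusion that the paper's one-line proof leaves implicit.
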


\begin{proof}
The CBF condition is simply: $L_g h(x) = 0$ implies that $L_f h(x) \geq - \alpha (h(x))$.  One can verify that this implies that \eqref{eq:firstsolutiontoqp} is well defined, thus \eqref{eqn:cbf:definition} is satisfied and $h$ is a CBF.  The result then follows by combining Theorem \ref{thm:cbf} with Theorem \ref{thm:main}. 
\end{proof}

\section{Application to Quadrotors}
\label{sec:dynamics}

In the previous sections, our examples involved single integrators where the velocity commands from the APFs and the CBFs are tracked perfectly by the system.  This section will consider the application of the ideas presented on systems with non-trivial dynamics and, specifically, quadrotors. While we could apply the previous results, e.g., Proposition \ref{prop:nonlinear}, this would require both detailed model information and the ability to achieve torque control.  On quadrotors this is difficult in practice.  We thus describe the process of translating the formal results to practice via velocity-based tracking controllers. 

% However, this is never the case for actual systems, such as robotic systems with (often bounded) force and torque inputs. And while many existing papers show safety guarantees for dynamical systems using control barrier functions, they all require some amount of model information. In this section, we will observe the performance of the CBFs that filter the velocity using only the single integrator model, and compare this to the APFs, when applied to robotic systems that attempt to track these velocities.

%\subsection{Motivation: Double Integrator}

\newsec{Motivation: double integrator.}  To illustrate the issue with utilizing model-free collision avoidance on systems with non-trivial dynamics, we will compare the performance of the same artificial potential field and control barrier function described in Examples \ref{ex:APF} and \ref{ex:CBF}, but applied to a double integrator: 
\begin{gather}
\begin{split}
    \dot{x} &= v\\
    \dot{v} &= u
\end{split}
\end{gather}
The velocity outputs of the safety filters, denoted $v^*(x)$, are now tracked by the velocity-based controller:
\begin{equation}
\label{eqn:uvelbased}
    u(x,v) = -K (v - v^*(x)). 
\end{equation}
% The artificial potential field is run identically to that in Example \ref{ex:APF}, and the robot path is shown in Figure \ref{fig:PF_DI}. 

\begin{figure}[b]
        \centering
    \includegraphics[width=.8\columnwidth,trim= 50 0 50 0,clip]{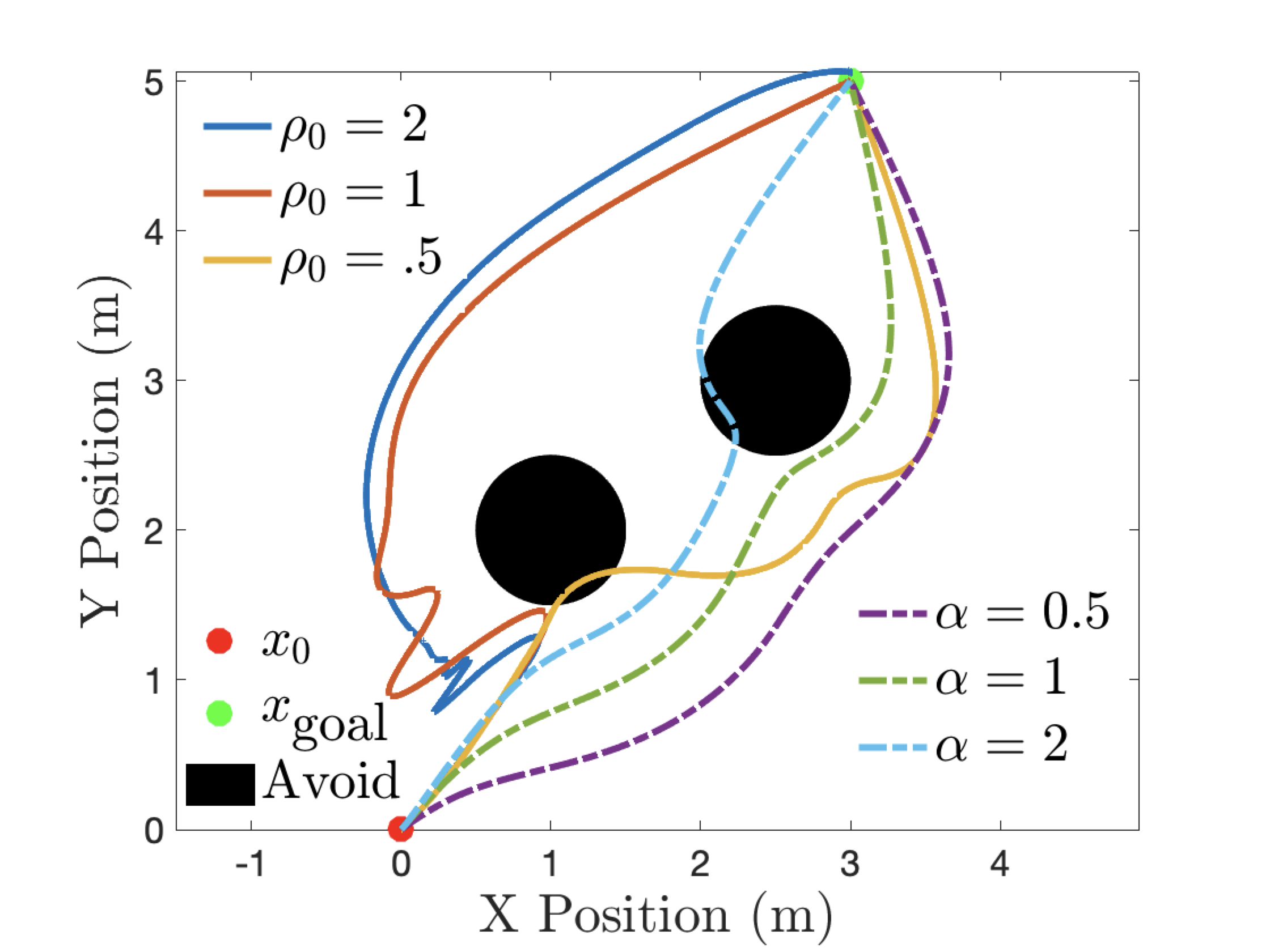}
    \caption{Potential Fields and CBFs on a double integrator.}
    \label{fig:PF_DI}
\end{figure}

The velocity-based controller is implemented for $v^*$ obtained from APFs and CBFs (per Examples \ref{ex:APF} and \ref{ex:CBF}). 
For the APF, the $\rho_0$ values of 1 and 2, the APF is able to keep the system safe while reaching the goal. However, large oscillations occur while approaching the first obstacle. The oscillations are not present with $\rho_0$ equal to 0.5, but safety is no longer maintained. Eliminating these oscillations and maintaining safety is possible, but would require additional tuning.  
% requires more extensive modification of the repulsive gain $K_{\textrm{rep}}$ as well $\rho_0$, or utilizing a different repulsive potential.
For CBFs, safety could be trivially guaranteed by utilizing the double integrator dynamics in the CBF-QP, but we instead utilize the velocity-based controller from Example \ref{ex:CBF}. Safety is maintained for $\alpha$ values of 0.5 and 1, but it is not maintained for $\alpha = 2$. This biggest difference between the performance of the CBF and the APF is the lack of oscillations when approaching the obstacles, with CBFs resulting in smoother controllers.  This is a trend that will be seen in simulation and experimentally on the quadrotor. 

\begin{figure*}[t]
    \centering
    \begin{subfigure}[t]{0.19\textwidth}
    \includegraphics[width=1\columnwidth,trim=0 0 0 20,clip]{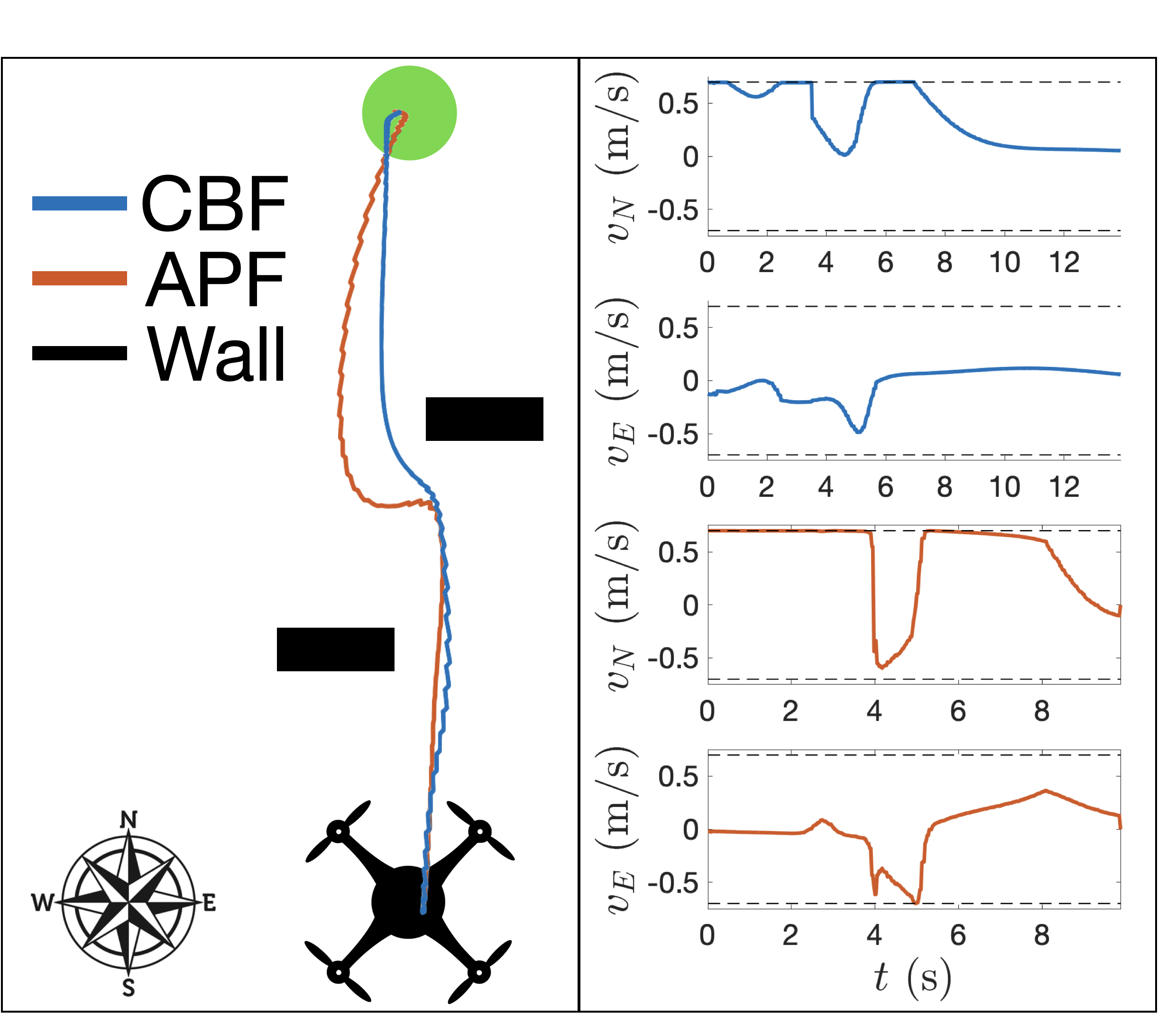}
    \end{subfigure}
    \begin{subfigure}[t]{0.19\textwidth}
    \includegraphics[width=1\columnwidth,trim=0 0 0 20,clip]{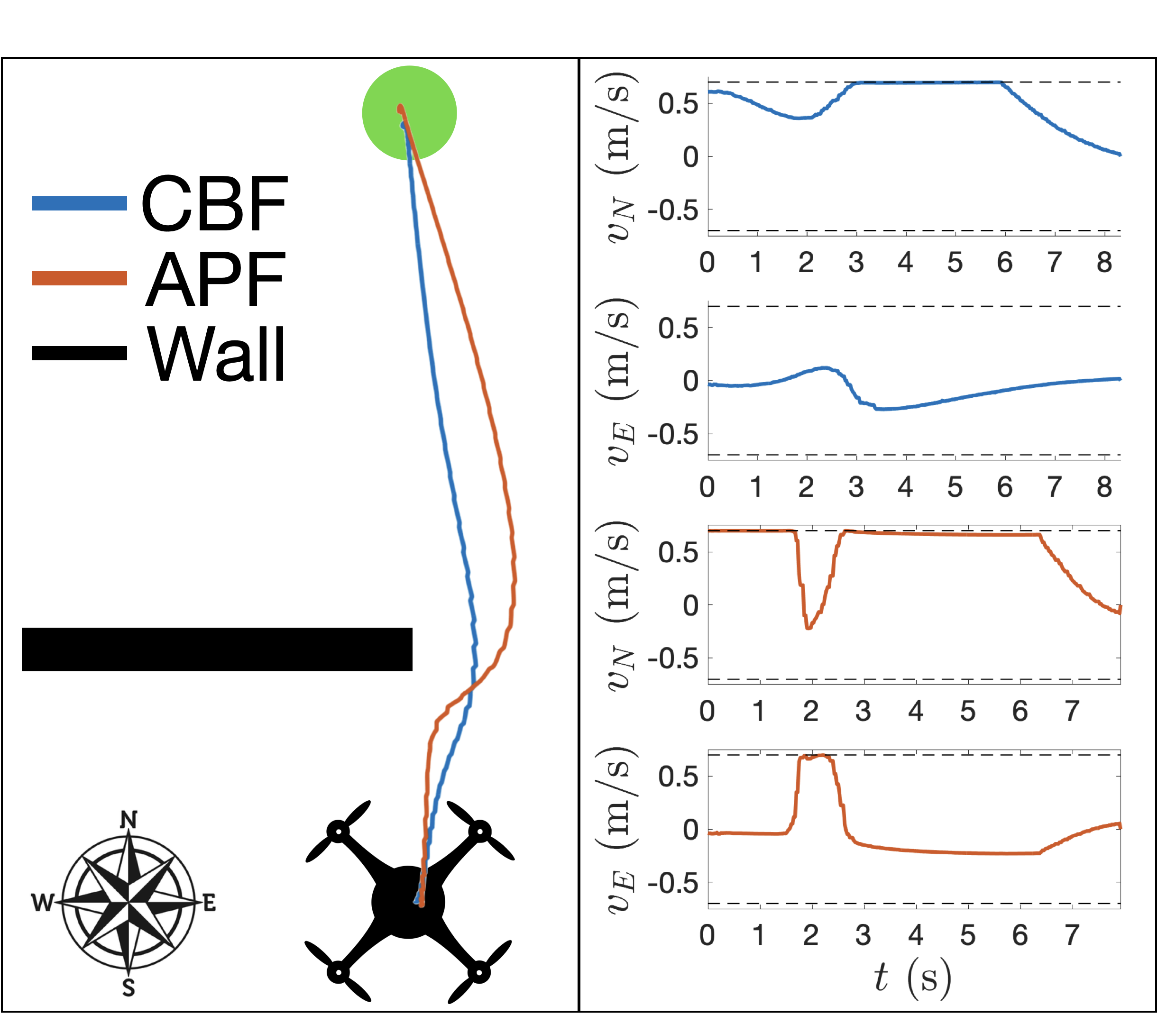}
    \end{subfigure}
    \begin{subfigure}[t]{0.19\textwidth}
    \includegraphics[width=1\columnwidth,trim=0 0 0 20,clip]{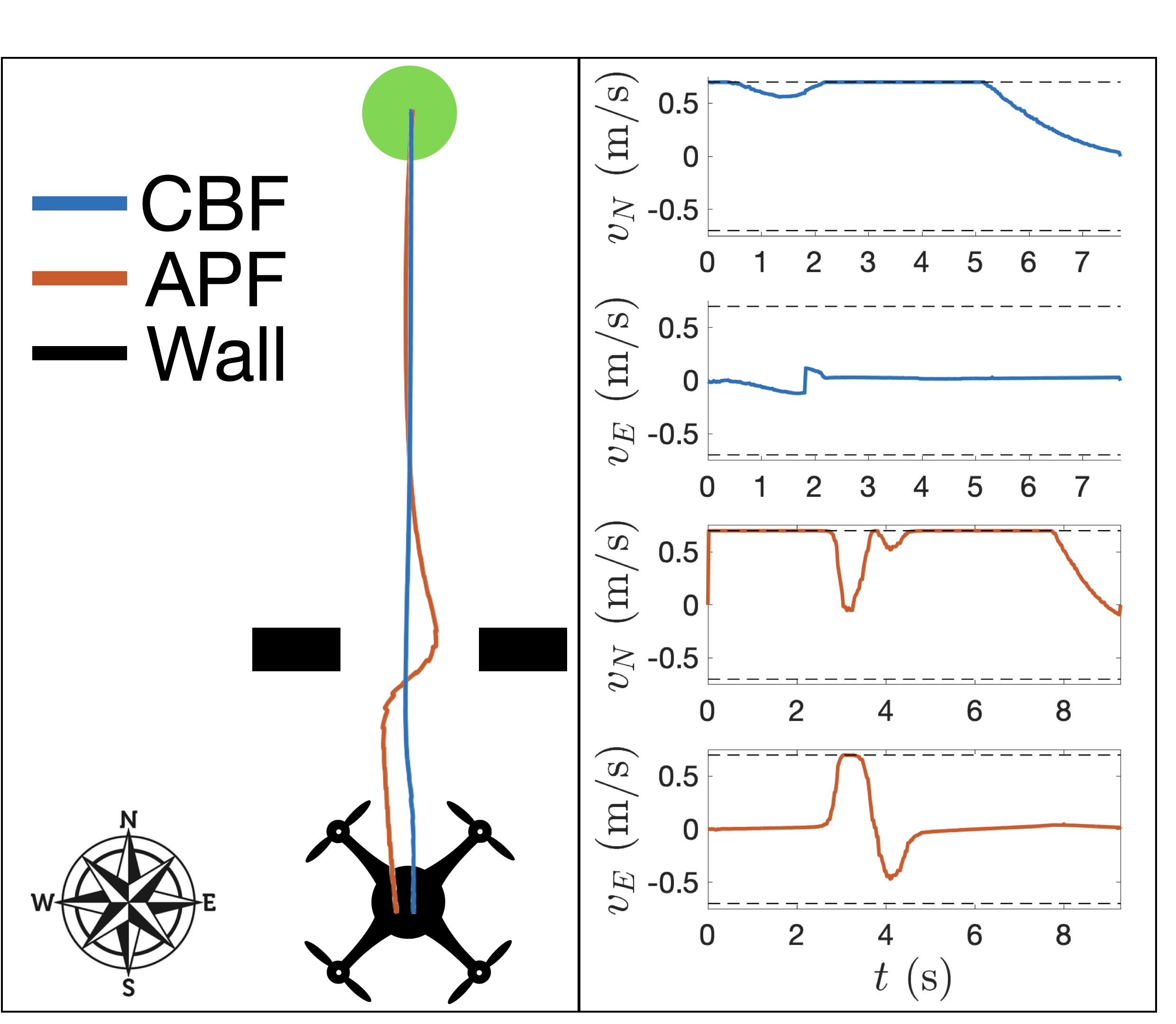}
    \end{subfigure}
    \begin{subfigure}[t]{0.19\textwidth}
    \includegraphics[width=1\columnwidth,trim=0 0 0 20,clip]{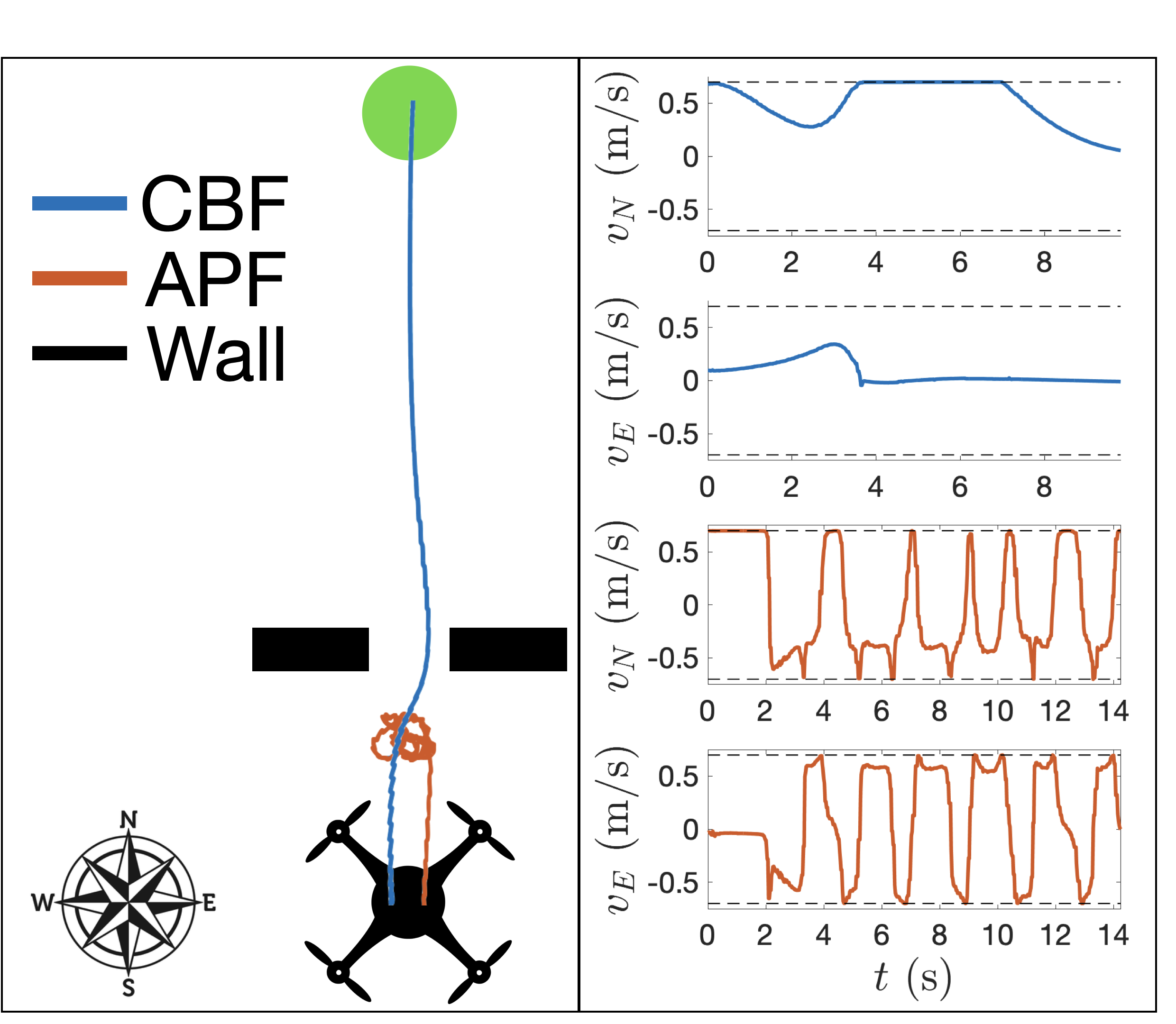}
    \end{subfigure}
    \begin{subfigure}[t]{0.19\textwidth}
    \includegraphics[width=1\columnwidth,trim=0 0 0 20,clip]{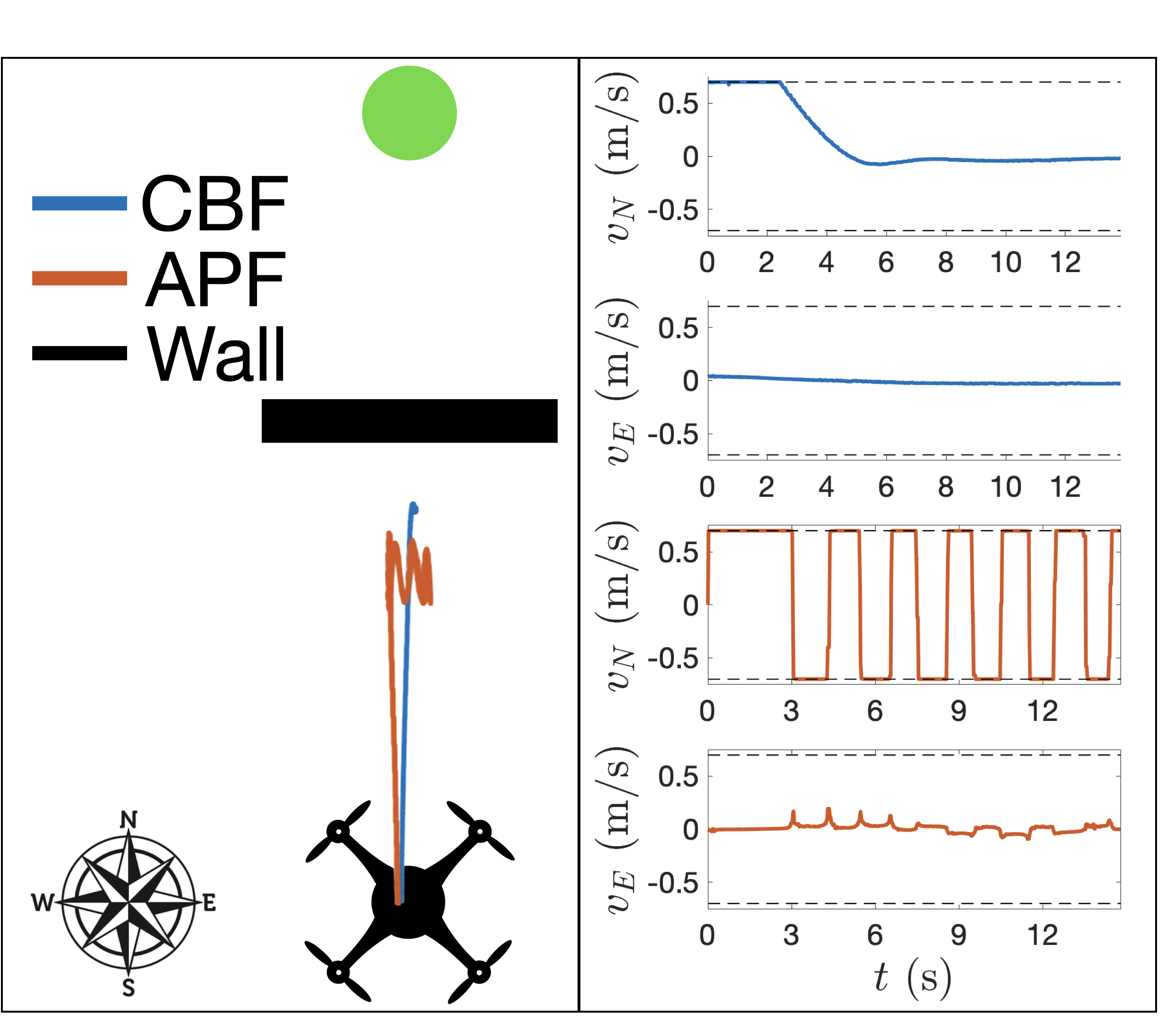}
    \end{subfigure}
    \caption{Simulation results for the quadrotor with APFs and CBFs for the five scenarios considered in this paper.}
    \label{fig:quad_sim}
\end{figure*}

%\newpage

\noindent \phantom{A} \vspace{-11.9 pt} \linebreak 
% \vspace{-8 pt}
\textbf{Application to Quadrotors in Simulation.}
To provide a more realistic comparison of APFs and CBFs realized as velocity-based controllers \eqref{eqn:uvelbased}, we consider their application to quadrotors in the context of obstacle avoidance.  In particular, we compare the APF and CBF velocity-based controllers in a high-fidelity simulation environment based on the physical hardware that will be detailed in the next section (see Fig. \ref{fig:hardware_setup}). The dynamics and low-level control is provided by the ArduPilot SITL simulator, and the velocity commands are produced and filtered in ROS nodes using simulated LIDAR sensor data from Gazebo.   

% test a more modern potential field formulation that has been tuned for a quadrotor in simulation. The environment is a high-fidelity simulation environment based on the physical system that will be detailed in the following section. The dynamics and low-level control is provided by the ArduPilot SITL simulator, and the velocity commands are produced and filtered in ROS nodes using simulated LIDAR sensor data from Gazebo.

In the context of the APF velocity-based controller, we will utilize the general form given in Example \ref{ex:APF}, with the same attractive potential as given in \eqref{eq:khatib_attr}.  The repulsive potential is replaced with a more modern potential field formulation that has been tuned for a quadrotor in simulation---this was done to avoid the repulsive potential from becoming ill-posed when realized in practice.  In particular, the repulsive potential is taken from \cite{gazi2007aggregation} yielding the repulsive force:
\begin{equation}
    F_{\textrm{rep}} = (x-x_{Oi})K_{\textrm{rep}}\exp \left(-\frac{\rho^2}{\rho_0}\right)
\end{equation}
with $\rho = \norm{x - x_{Oi}}$. The values of $K_{\textrm{rep}}$ and ${\rho_0}$ were tuned until oscillations vanished in practical cases, and safety was achieved, but optimized such that they do not affect flight when collision is unlikely.

% For the CBF-based velocity controller, we use a formulation identical to Example \ref{ex:CBF}. In particular, the barrier function is given as in \eqref{eqn:hex}, with $D_{\rm obs} = 0.3$: \begin{equation}
%     h(x) = \underset{i \in \{1,2 \}}{\min} \norm{x - x_{h,i}} - 0.3,
% \end{equation}
% where $x_{h,i}$ is the $i^{\rm th}$ point of the simulated laser scan, and $0.3$ represents the minimum distance that the drone must maintain from the obstacles. 
% Utilizing this with the single integrator dynamics results in the control law in \eqref{eqn:QPsimple} which is passed to \eqref{eqn:uvelbased}. 
% The value of $\alpha$ kept to 1, to ensure that no tuning is performed to improve the results.

For the CBF-based velocity controller, we use a formulation identical to Example \ref{ex:CBF}. In particular, the barrier function is given as in \eqref{eqn:hex},
where $x_{Oi}$ is the $i^{\rm th}$ point of the simulated laser scan, and $D_{\rm obs} = 0.3$ represents the minimum distance that the drone must maintain from the obstacles. 
Utilizing this with the single integrator dynamics results in the control law in \eqref{eqn:QPsimple} which is passed to a low-level velocity tracking controller. 
The value of $\alpha$ kept to 1, to ensure that no tuning is performed to improve the results.

%  with $D_{\rm obs} = 0.3$ 
%
% The formulation for the control barrier function is identical to that in the previous examples. Again, the dynamics are given by
% \begin{equation}
%     \dot{x} = v,
% \end{equation}
% with the velocity $v$ being the control input. The control barrier function is 
% \begin{equation}
%     h(x) = \underset{i \in \{1,2 \}}{\min} \norm{x - x_{h,i}} - 0.3,
% \end{equation}
% where $x_{h,i}$ is the $i^{\rm th}$ point of the simulated laser scan, and $0.3$ represents the minimum distance that the drone must maintain from the obstacles. The value of $\alpha$ kept to 1, to ensure that no tuning is performed to improve the results.

For each experiment, the quadrotor is given a waypoint 5m ahead in the x-direction. The five tests are described as follows, in order of difficulty for the collision avoidance algorithms:
(1) in between the quadrotor and the goal, two obstacles are placed that are offset from the center of the path, but close enough to effect the drone. (2) A single obstacle is placed such that the edge of the obstacle aligns with the center of the drone. This is done to ensure that the drone is able to find a path around it, but to strongly obstruct the drone. (3) Two obstacles are placed with edges 1 m apart, to mimic a 1 m wide doorway, and the drone has to fly through this to reach the goal.  (4) The doorway from the previous test is reduced to 0.7 m in width. (5) In between the starting position and the goal is a large wall that the drone is not able to pass. This is to test the oscillations that may occur when running directly at an obstacle that is in front of the goal.

% \begin{itemize}
%     \item In between the quadrotor and the goal, two obstacles are placed that are offset from the center of the path, but close enough to effect the drone.
%     \item A single obstacle is placed such that the edge of the obstacle aligns with the center of the drone. This is done to ensure that the drone is able to find a path around it, but to strongly obstruct the drone.
%     \item Two obstacles are placed with edges 1 m apart, to mimic a 1 m wide doorway, and the drone has to fly through this to reach the goal.
%     \item The doorway from the previous test is reduced to 0.7 m in width.
%     \item In between the starting position and the goal is a large wall that the drone is not able to pass. This is to test the oscillations that may occur when running directly at an obstacle that is in front of the goal.
% \end{itemize}

Each of the five tests are run for both the artificial potential field, as well as the control barrier function. The setups and paths are shown in Figure \ref{fig:quad_sim}, along with the velocities. Oscillations do not occur for the CBF velocity-based controller, and only occur for the APF based controller situations where the drone is unable to get to the goal, e.g., the wall and the 0.7 m doorway.  In all cases, the CBF is able to get closer to the obstacles while staying safe due to its pointwise optimally. 

\section{Experimental Results}
\label{sec:hardware}

In this section, we compare the performance of the two methods on a quadrotor. Localization and obstacle detection are done entirely onboard, using onboard sensors, making this a very practical comparison for real-world use.  Practically, we realize the APF and CBF controllers as outlined in Sect. \ref{sec:dynamics}.  

\newsec{Hardware setup.}
The quadrotor used for experiments is shown in Figure \ref{fig:hardware_setup}. It consists of a Lumenier Defender frame, four T-Motor F40 PRO II 1600 KV brushless motors, a Lumenier 50A 4-in-1 ESC, a mRobotics Pixracer R15 autopilot, a T265 RealSense camera, a Intel NUC i7 onboard computer, and a Hokuyo UST-10LX LIDAR. 
The Hokuyo UST-10LX 2D LIDAR gives 1080 points in front of the quadrotor in a 270$^\circ$ field of view along the XY plane. Google's Cartographer SLAM package was used with the Hokuyo LIDAR and the RealSense camera for localization. Additionally, the Hokuyo LIDAR is used for obstacle detection and avoidance. 
An Intel NUC i7 onboard computer is used to run the ROS nodes that perform the localization and collision avoidance. The high-level velocity commands are passed from the onboard computer to the Pixracer flight controller. The flight controller utilizes a cascading PID control structure of velocity, acceleration, attitude, and angular rate.

\newsec{Simulation vs hardware results.}
The same five tests described in the previous section (see Fig. \ref{fig:quad_sim}) were implemented on the hardware, and the results are shown in Fig. \ref{fig:all_tests}. The only difference in the setup was that the drone is now commanded to yaw $180^{\circ}$ and return along the same path, in order to maximize the amount of data for the analysis.
While the hardware results are similar to simulation for CBFs, the artificial potential field suffers from significantly more oscillations than in the high-fidelity simulation environment. This suggests that the CBF implementation is more robust to model uncertainty and noise, as the APF would need to be tuned again to eliminate oscillations due to the differences between the simulation and reality.  Finally, APFs fail to reach the goal in the case of the narrow door, while the CBFs succeeds. Thus, the CBFs outperformed APFs on hardware. 

\begin{figure}[t]
    \centering
    \includegraphics[width=.9\columnwidth,trim=0 0 0 80   ,clip]{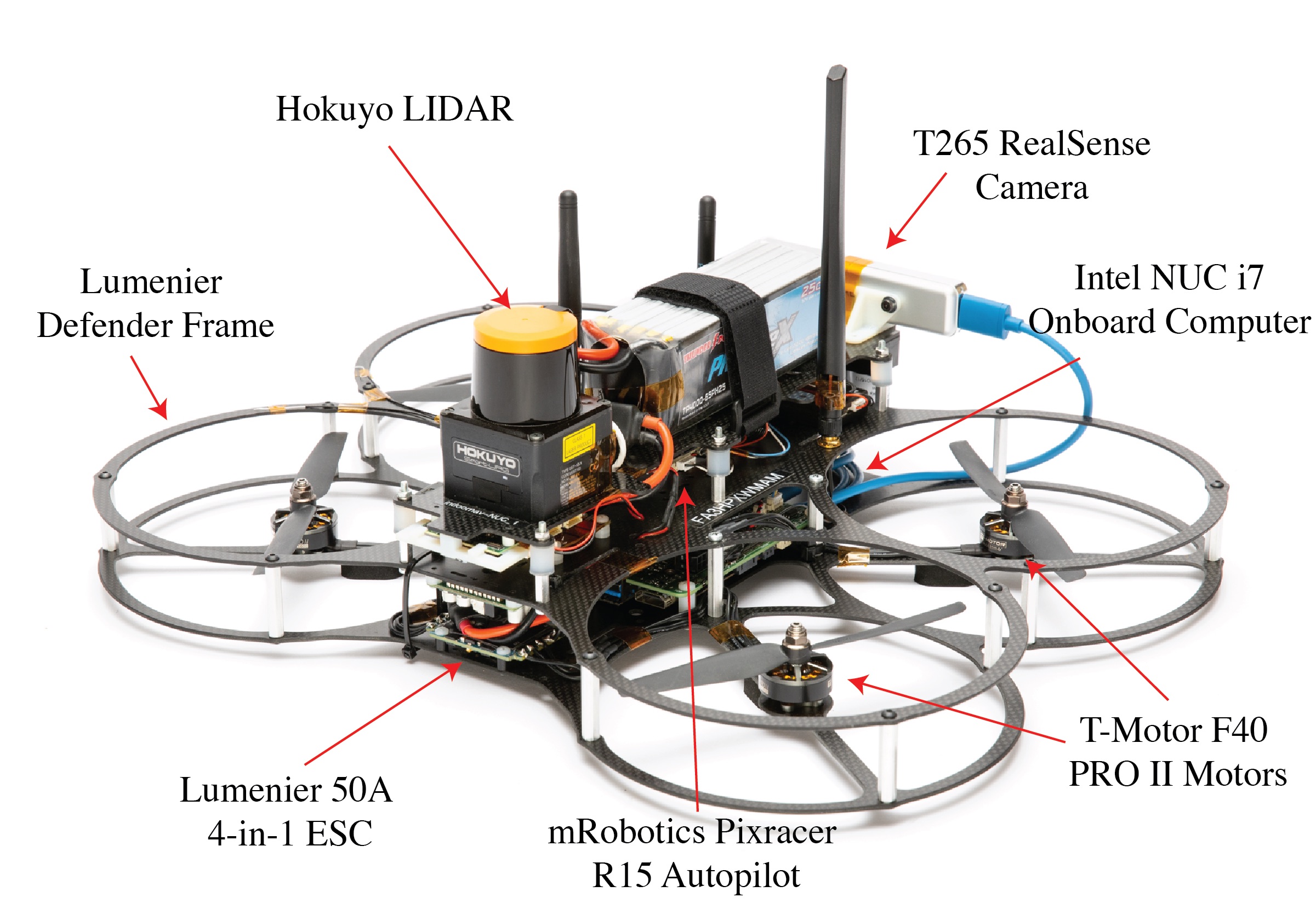}
    \caption{The quadrotor used in the experiments.}
    \label{fig:hardware_setup}
\end{figure}

\newsec{Manual flight.}
In addition to replicating the tests performed in simulation, the drone was also piloted manually through a more complicated obstacle course, to show the robustness of the control barrier function to sporadic user inputs. The pilot is trying to run the system towards the obstacles, but the CBF prevents the drone from crashing. The results are shown in Figure \ref{fig:manual} where it can be seen that the quadrotor gets close to the wall, but safety is maintained. % encoded by positively of the CBF.

\section{Conclusions}

In this paper, we showed that control barrier functions offer a viable, and arguably improved, alternative to artificial potential fields for real-time obstacle avoidance. It was shown that artificial potential fields can be formulated as control barrier functions, and the resulting behavior is smoother than the APF alone. CBFs were then implemented on a quadrotor via velocity-based control in simulation and on hardware, with no tuning whatsoever, and the results outperformed the existing (expertly tuned) APF-based collision avoidance system. 
Future work includes extending the model-based formal guarantees presented in this paper to the velocity-based model-free controller implemented in practice, along with further hardware demonstrations on dynamic robotic systems.  

% Future work includes formalizing safety guarantees for these model-free control barrier functions, utilizing bounds on the tracking error of the underlying controller, and further extending the APF-based CBF for general nonlinear systems.

\begin{figure}[t]
    \centering
    \includegraphics[width=.9\columnwidth,trim=0 0 0 90,clip]{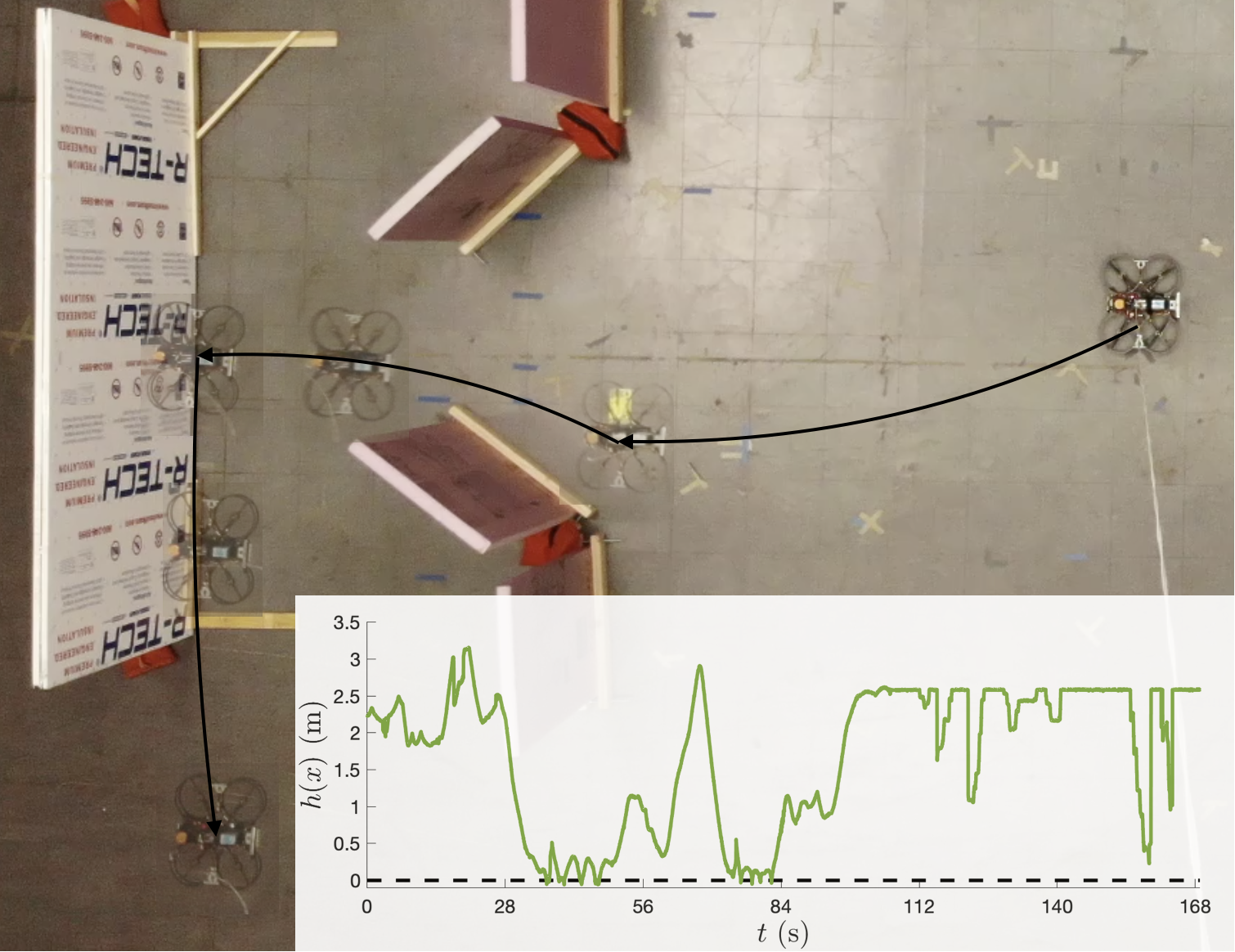}
    \caption{Partial trajectory and CBF data from the manual flight.}
    \label{fig:manual}
\end{figure}

\begin{figure*}[t]
    \centering
    \includegraphics[width=\textwidth,trim=0 0 0 0, clip]{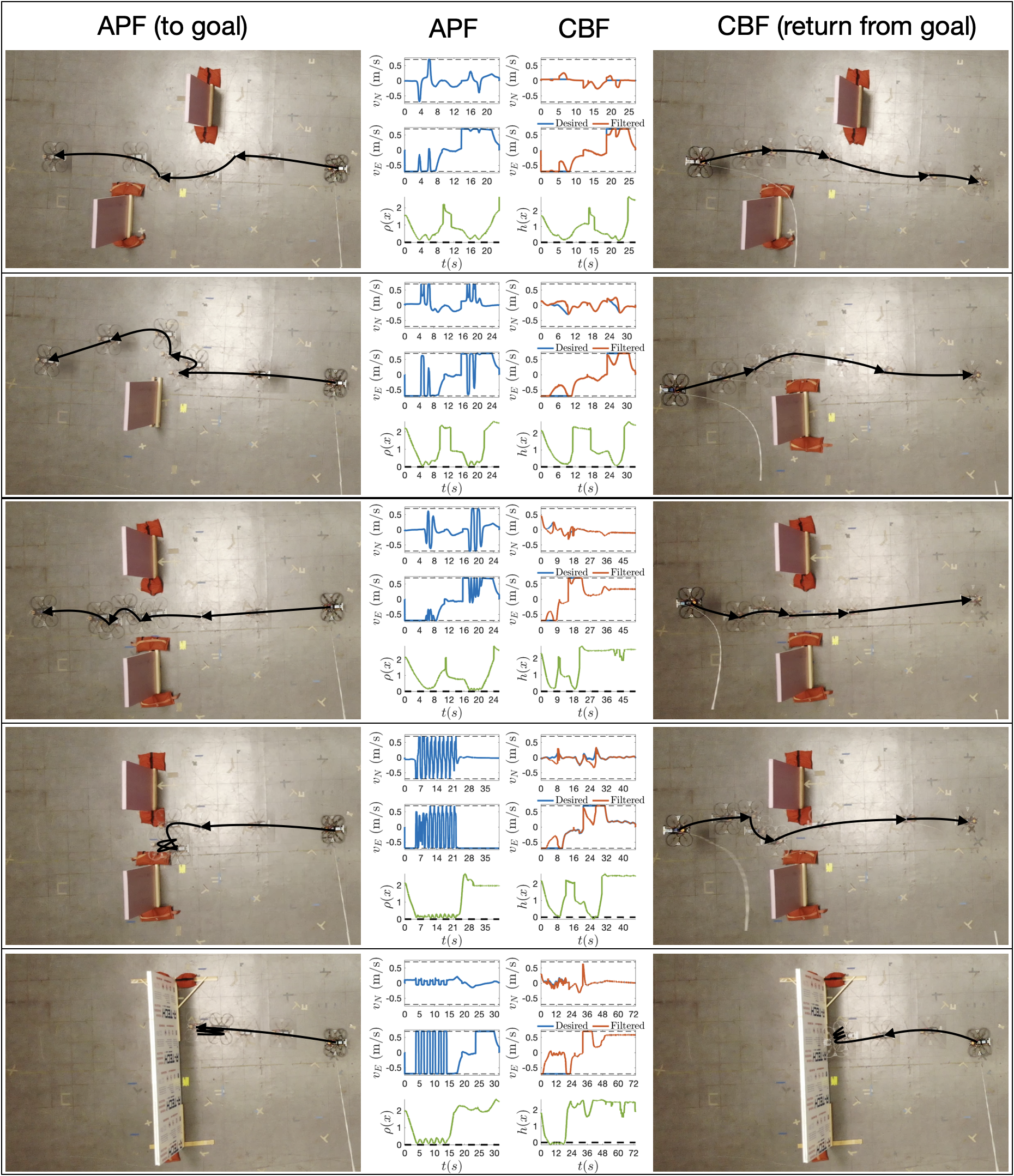}
    \caption{Hardware results for the control barrier function and artificial potential field. Note that the APF showcases the drone as it goes to the goal, while the CBF is pictured returning to the start position. Due to symmetry of the setup, these trips almost identical for each method, and the plots contain data for the round trip. The video found here \cite{vimeo_quad} shows both directions for each of the experiments from three different angles. The first two columns show the velocities in the north and east directions, and the final column shows the value of $\rho(x) - 0.3$ for the APF on the left, and $h(x)$ for the barrier on the right.}
    \label{fig:all_tests}
\end{figure*}

\renewcommand{\baselinestretch}{0.99}
\bibliographystyle{IEEEtran}
\bibliography{mybib}

\end{document}